\def\eqref#1{equation~\ref{#1}}
\def\Eqref#1{Equation~\ref{#1}}
\def\1{\bm{1}}
\DeclareMathAlphabet{\mathsfit}{\encodingdefault}{\sfdefault}{m}{sl}
\SetMathAlphabet{\mathsfit}{bold}{\encodingdefault}{\sfdefault}{bx}{n}
\newtheorem{theorem}{Theorem}
\newtheorem{corollary}{Corollary}
\newtheorem{proposition}{Proposition}
\newtheorem{lemma}{Lemma}
\title{Analyzing Tree Architectures in Ensembles \\ via Neural Tangent Kernel}
\author{
  Ryuichi Kanoh$^{\mathbf{1,2}}$, \, Mahito Sugiyama$^{\mathbf{1,2}}$  \\
  $^{1}$National Institute of Informatics\\
  $^{2}$The Graduate University for Advanced Studies, SOKENDAI\\
  \texttt{\{kanoh, mahito\}@nii.ac.jp} \\
}
\begin{document}

\maketitle

\begin{abstract}
  A \emph{soft tree} is an actively studied variant of a decision tree that updates splitting rules using the gradient method. Although soft trees can take various architectures,
  their impact is not theoretically well known.
  In this paper, we formulate and analyze the \emph{Neural Tangent Kernel} (NTK) induced by \emph{soft tree ensembles} for arbitrary tree architectures. 
  This kernel leads to the remarkable finding that only the number of leaves at each depth is relevant for the tree architecture in ensemble learning with an infinite number of trees.
  In other words, if the number of leaves at each depth is fixed, the training behavior in function space and the generalization performance are exactly the same across different tree architectures, even if they are not isomorphic.
  We also show that the NTK of asymmetric trees like decision lists does not degenerate when they get infinitely deep. This is in contrast to the perfect binary trees, whose NTK is known to degenerate and leads to worse generalization performance for deeper trees.
\end{abstract}

\section{Introduction}
\emph{Ensemble learning} is one of the most important machine learning techniques used in real world applications. By combining the outputs of multiple predictors, it is possible to obtain robust results for complex prediction problems.
\emph{Decision trees} are often used as weak learners in ensemble learning~\citep{Statistics01randomforests,Chen:2016:XST:2939672.2939785,NIPS2017_6449f44a}, and they can have a variety of structures such as various tree depths and whether or not the structure is symmetric.
In the training process of tree ensembles, even a decision stump~\citep{IBA1992233}, a decision tree with the depth of $1$, is known to be able to achieve zero training error as the number of trees increases~\citep{10.5555/3091696.3091715}. 
However, generalization performance varies depending on weak learners~\citep{10.1007/978-3-319-71273-4_9}, and the theoretical properties of their impact are not well known, which results in the requirement of empirical trial-and-error adjustments of the structure of weak learners.

In this paper, we focus on a \emph{soft tree}~\citep{7410529,DBLP:journals/corr/abs-1711-09784} as a weak learner. A soft tree is a variant of a decision tree that inherits characteristics of neural networks. Instead of using a greedy method~\citep{10.1023/A:1022643204877, BreiFrieStonOlsh84} to search splitting rules, soft trees make decision rules soft and simultaneously update the entire model parameters using the gradient method. 
Soft trees have been actively studied in recent years in terms of predictive performance~\citep{7410529, Popov2020Neural, pmlr-v119-hazimeh20a}, interpretability~\citep{DBLP:journals/corr/abs-1711-09784, wan2021nbdt}, and potential techniques in real world applications like pre-training and fine-tuning~\citep{10.1145/3292500.3330858, DBLP:journals/corr/abs-1908-07442}. In addition, a soft tree can be interpreted as a Mixture-of-Experts~\citep{716791,ShazeerMMDLHD17,lepikhin2021gshard}, a practical technique for balancing computational cost and prediction performance.

To theoretically analyze soft tree ensembles, \citet{kanoh2022a} introduced the \emph{Neural Tangent Kernel} (NTK)~\citep{NIPS2018_8076} induced by them.
The NTK framework analytically describes the behavior of ensemble learning with infinitely many soft trees, which leads to several non-trivial properties such as global convergence of training and the effect of parameter sharing in an oblivious tree~\citep{Popov2020Neural,NEURIPS2018_14491b75}. However,
their analysis is limited to a specific type of trees, perfect binary trees, and theoretical properties of other various types of tree architectures are still unrevealed.

\begin{wrapfigure}[15]{r}[0pt]{0.5\textwidth}
    \centering
    \includegraphics[width=\linewidth]{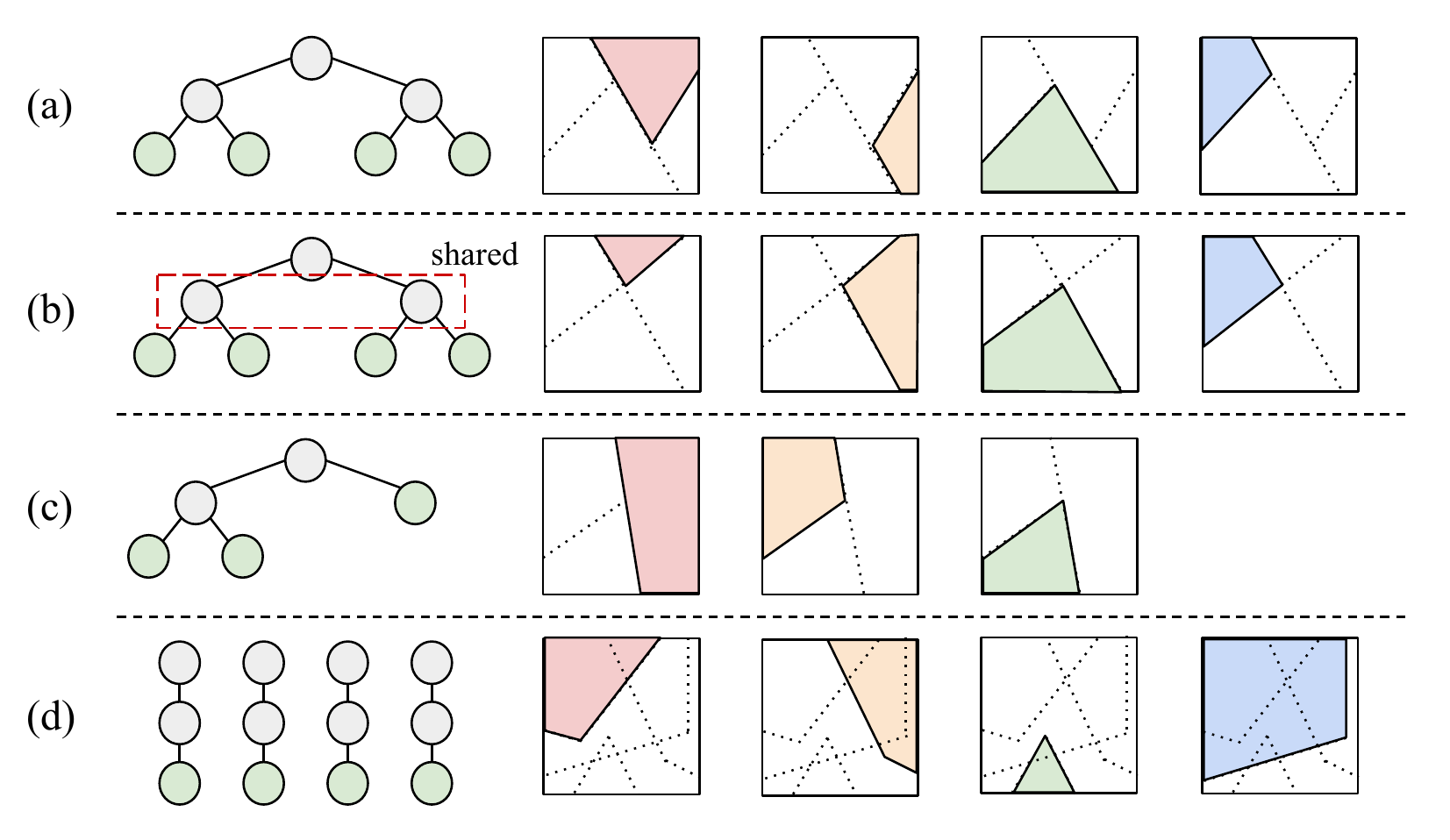}
    \caption{Schematic image of decision boundaries in an input space split by the (a)~perfect binary tree, (b)~oblivious tree, (c)~decision list, and (d)~rule set.}
    \label{fig:schematics}
\end{wrapfigure}

Figure~\ref{fig:schematics} illustrates representatives of tree architectures and their associated space partitioning in the case of a two-dimensional space. Note that each partition is not the axis parallel direction as we are considering soft trees. Not only symmetric trees, as shown in (a) and (b), but also asymmetric trees~\citep{10.1023/A:1022607331053}, as shown in (c), are often used in practical applications~\citep{pmlr-v97-tanno19a}. Moreover, the structure in (d) corresponds to the \emph{rule set ensembles}~\citep{RULE_ENSEMBLE_2008}, a combination of rules to obtain predictions, which can be viewed as a variant of trees. Although each of these architectures has a different space partitioning and is practically used, it is not theoretically clear whether or not such architectures make any difference in the resulting predictive performance in ensemble learning.

In this paper, we study the impact of \emph{tree architectures} of soft tree ensembles from the NTK viewpoint. 
We analytically derive the NTK that characterizes the training behavior of soft tree ensemble with arbitrary tree architectures and theoretically analyze the generalization performance.
Our contributions can be summarized as follows:
\begin{itemize}[leftmargin=*]
\item \textbf{The NTK of soft tree ensembles is characterized by only the number of leaves per depth.}

We derive the NTK induced by an infinite rule set ensemble (Theorem~\ref{thm:rule_ntk}). Using this kernel, we obtain a formula for the NTK induced by an infinite ensemble of trees with arbitrary architectures (Theorem~\ref{thm:any}), which subsumes~\citep[Theorem~1]{kanoh2022a} as a special case (perfect binary trees). Interestingly, the kernel is determined by the number of leaves at each depth, which means that non-isomorphic trees can induce the same NTK (Corollary~\ref{coro:isomorphic}).

\item \textbf{The decision boundary sharing does not affect to the generalization performance.}

Since the kernel is determined by the number of leaves at each depth, infinite ensembles with trees and rule sets shown in Figure~\ref{fig:schematics}(a) and (d) induce exactly the same NTKs. This means that the way in which decision boundaries are shared does not change the model behavior within the limit of an infinite ensemble (Corollary~\ref{coro:boundary}).

\item \textbf{The kernel degeneracy does not occur in deep asymmetric trees.}

The NTK induced by perfect binary trees degenerates when the trees get deeper: the kernel values become almost identical for deep trees even if the inner products between input pairs are different, resulting in poor performance in numerical experiments. In contrast, we find that the NTK does not degenerate for trees that grow in only one direction (Proposition~\ref{pro:dl}); hence  generalization performance does not worsen even if trees become infinitely deep (Proposition~\ref{pro:inf}, Figure~\ref{fig:exp}).
\end{itemize}


\section{Preliminary} \label{sec:formulation}
We formulate soft trees, which we use as weak learners in ensemble learning, and review the basic properties of the NTK and the existing result for the perfect binary trees.
\subsection{Soft Trees} \label{subsec:softtree}
Let us perform regression through an ensemble of $M$ soft trees. 
Given a data matrix $\boldsymbol{x} \in \mathbb{R}^{F \times N}$ composed of $N$ training samples $\boldsymbol{x}_{1}, \dots, \boldsymbol{x}_{N}$ with $F$ features, each weak learner, indexed by $m \in [M] = \{1, \dots, M\}$, has a parameter matrix $\boldsymbol{w}_m \in \mathbb{R}^{F \times \mathcal{N}}$ for internal nodes and $\boldsymbol{\pi}_m \in \mathbb{R}^{1 \times \mathcal{L}}$ for leaf nodes. They are defined in the following format:
\begin{align*}
    \boldsymbol{x}=\left(\begin{array}{ccc}
    \mid & \ldots & \mid \\
    \boldsymbol{x}_1 & \ldots & \boldsymbol{x}_{N} \\
    \mid & \ldots & \mid
    \end{array}\right),\quad\boldsymbol{w}_{m} = \left(\begin{array}{ccc}
    \mid & \ldots & \mid \\
    \boldsymbol{w}_{m,1} & \ldots & \boldsymbol{w}_{m,\mathcal{N}} \\
    \mid & \ldots & \mid
    \end{array}\right),\quad\boldsymbol{\pi}_{m} = \left(
    \pi_{m,1}, \dots, \pi_{m,\mathcal{L}}\right),
\end{align*}
where internal nodes and leaf nodes are indexed from $1$ to $\mathcal{N}$ and $1$ to $\mathcal{L}$, respectively.
For simplicity, we assume that $\mathcal{N}$ and $\mathcal{L}$ are the same across different weak learners throughout the paper.

\subsubsection{Internal nodes}
In a soft tree,
the splitting operation at an intermediate node $n \in [\mathcal{N}]  = \{1, \dots, \mathcal{N}\}$ is not completely binary.
To formulate the probabilistic splitting operation, we introduce the notation  $\ell \swarrow n$ (resp.~$n \searrow \ell$), which is a binary relation being true if a leaf $\ell \in [\mathcal{L}]  = \{1, \dots, \mathcal{L}\}$ belongs to the left (resp.~right) subtree of a node $n$ and false otherwise.
We also use an indicator function $\mathds{1}_Q$ on the argument $Q$; that is, $\mathds{1}_{Q} = 1$ if $Q$ is true and $\mathds{1}_{Q}=0$ otherwise.
Every leaf node $\ell \in [\mathcal{L}]$ holds the probability that data reach to it, which is formulated as a function $\mu_{m,\ell} : \mathbb{R}^{F} \times \mathbb{R}^{F \times \mathcal{N}} \to [0,1]$ defined as
\begin{align}
    \mu_{m,\ell}(\boldsymbol{x}_i, \boldsymbol{w}_m)=\prod_{n=1}^{\mathcal{N}} {\underbrace{\sigma(\boldsymbol{w}_{m,n}^\top \boldsymbol{x}_i)}_{\text{\upshape{flow to the left}}}}^{\mathds{1}_{\ell \swarrow n}}{\underbrace{(1-\sigma(\boldsymbol{w}_{m,n}^\top \boldsymbol{x}_i))}_{\text{\upshape{flow to the right}}}} ^{\mathds{1}_{n \searrow \ell}}, \label{eq:mu}
\end{align}
where $\sigma: \mathbb{R}\to [0, 1]$ represents softened Boolean operation at internal nodes. 
The obtained value $\mu_{m,\ell}(\boldsymbol{x}_i, \boldsymbol{w}_m)$ is the probability of a sample $\boldsymbol{x}_i$ reaching a leaf $\ell$ in a soft tree $m$ with its parameter matrix $\boldsymbol{w}_m$.
If the output of a decision function $\sigma$ takes only $0.0$ or $1.0$, this operation realizes the hard splitting used in typical decision trees.
We do not explicitly use the bias term for simplicity as it can be technically treated as an additional feature.

Internal nodes perform as a $\operatorname{sigmoid}$-like decision function such as the scaled error function $\sigma(p) = \frac{1}{2} \operatorname{erf}(\alpha p)+\frac{1}{2} = \frac{1}{2} (\frac{2}{\sqrt{\pi}} \int_{0}^{\alpha p} e^{-t^{2}} \mathrm{~d} t) + \frac{1}{2}$, the two-class $\operatorname{sparsemax}$ function $\sigma(p)=\mathrm{sparsemax}([\alpha p, 0])$~\citep{pmlr-v48-martins16}, or the two-class $\operatorname{entmax}$ function $\sigma(p)=\mathrm{entmax}([\alpha p, 0])$~\citep{entmax}.
More precisely, any continuous function is possible if it is rotationally symmetric about the point $(0, 1/2)$ satisfying $\lim_{p\to\infty}\sigma(p)=1$, $\lim_{p\to-\infty}\sigma(p)=0$, and $\sigma(0)=0.5$.
Therefore, the theoretical results presented in this paper hold for a variety of $\operatorname{sigmoid}$-like decision functions. When the scaling factor $\alpha \in \mathbb{R}^{+}$~\citep{DBLP:journals/corr/abs-1711-09784} is infinitely large, $\operatorname{sigmoid}$-like decision functions become step functions and represent the (hard) Boolean operation.

\Eqref{eq:mu} applies to arbitrary binary tree architectures.
Moreover, if the flow to the right node $(1-\sigma(\boldsymbol{w}_{m,n}^\top \boldsymbol{x}_i))$ is replaced with $0$, it is clear that the resulting model corresponds to a \emph{rule set}~\citep{RULE_ENSEMBLE_2008}, which can be represented as a linear graph. 
Note that the value $\sum_{\ell=1}^{\mathcal{L}} \mu_{m,\ell}(\boldsymbol{x}_i, \boldsymbol{w}_{m})$ is always guaranteed to be $1$ for any soft trees, while it is not guaranteed for rule sets.

\subsubsection{Leaf nodes}
The prediction for each $\boldsymbol{x}_i$ from a weak learner $m$ parameterized by $\boldsymbol{w}_m$ and $\boldsymbol{\pi}_m$, represented as a function $f_{m} : \mathbb{R}^{F} \times \mathbb{R}^{F\times\mathcal{N}} \times \mathbb{R}^{1\times\mathcal{L}} \to \mathbb{R}$, is given by
\begin{align}
    f_m(\boldsymbol{x}_i, \boldsymbol{w}_m, \boldsymbol{\pi}_m)=\sum_{\ell=1}^{\mathcal{L}} \pi_{m,\ell} \mu_{m,\ell}(\boldsymbol{x}_i, \boldsymbol{w}_m), \label{eq:fm}
\end{align}
where $\pi_{m,\ell}$ denotes the response of a leaf $\ell$ of the weak learner $m$. This formulation means that the prediction output is the average of leaf values $\pi_{m,\ell}$ weighted by $\mu_{m,\ell}(\boldsymbol{x}_i, \boldsymbol{w}_m)$, the probability of assigning the sample $\boldsymbol{x}_i$ to the leaf $\ell$. In this model, $\boldsymbol{w}_m$ and $\boldsymbol{\pi}_m$ are updated during training with a gradient method. If $\mu_{m,\ell}(\boldsymbol{x}_i, \boldsymbol{w}_m)$ takes the value of only $1.0$ for one leaf and $0.0$ for the other leaves, the behavior of the soft tree is equivalent to a typical decision tree prediction.

\subsubsection{Aggregation}
When aggregating the output of multiple weak learners in ensemble learning, we divide the sum of the outputs by the square root of the number of weak learners, which results in
\begin{align}
    f(\boldsymbol{x}_i, \boldsymbol{w}, \boldsymbol{\pi})=\frac{1}{\sqrt{M}} \sum_{m=1}^{M} f_{m}(\boldsymbol{x}_i, \boldsymbol{w}_{m}, \boldsymbol{\pi}_{m}). \label{eq:norm}
\end{align}
This $1 / \sqrt{M}$ scaling is known to be essential in the existing NTK literature to use the weak law of the large numbers~\citep{NIPS2018_8076}. Each of model parameters $\boldsymbol{w}_{m,n}$ and $\pi_{m,\ell}$ are initialized with zero-mean i.i.d.~Gaussians with unit variances.
We refer such an initialization as the \emph{NTK initialization}.

\subsection{Neural Tangent Kernel}

For any learning model function $g$, the NTK induced by $g$ at a training time $\tau$ is formulated as a matrix $\widehat{\boldsymbol{H}}_\tau ^\ast \in \mathbb{R}^{N\times N}$, in which each $(i, j) \in [N] \times [N]$ component is defined as
\begin{align}
    [\widehat{\boldsymbol{H}}_\tau^\ast]_{ij} \coloneqq \widehat{{\Theta}}_\tau ^\ast (\boldsymbol{x}_i, \boldsymbol{x}_j) \coloneqq  \left\langle\frac{\partial g(\boldsymbol{x}_i, \boldsymbol{\theta}_\tau)}{\partial \boldsymbol{\theta}_\tau}, \frac{\partial g(\boldsymbol{x}_j,\boldsymbol{\theta}_\tau)}{\partial \boldsymbol{\theta}_\tau}\right\rangle
    , \label{eq:ntk}
\end{align}
where $\widehat{{\Theta}}_\tau ^\ast: \mathbb{R}^F \times \mathbb{R}^F \to \mathbb{R}$. The bracket $\langle\cdot, \cdot\rangle$ denotes the inner product and $\boldsymbol{\theta}_\tau \in \mathbb{R}^P$ is a concatenated vector of all the $P$ trainable model parameters at $\tau$. An asterisk ``~$^\ast{}$~'' indicates that the model is arbitrary. The model function $g:\mathbb{R}^{F} \times \mathbb{R}^P \to \mathbb{R}$ used in \Eqref{eq:ntk} is expected to be applicable to a variety of model architectures. If we use soft trees introduced in Section~\ref{subsec:softtree} as weak learners, the NTK is formulated as $\sum_{m=1}^{M} \sum_{n=1}^{\mathcal{N}} \left\langle\frac{\partial f\left(\boldsymbol{x}_i, \boldsymbol{w}, \boldsymbol{\pi}\right)}{\partial \boldsymbol{w}_{m,n}}, \frac{\partial f\left(\boldsymbol{x}_j,\boldsymbol{w}, \boldsymbol{\pi} \right)}{\partial \boldsymbol{w}_{m,n}}\right\rangle + \sum_{m=1}^{M} \sum_{\ell=1}^{\mathcal{L}} \left\langle\frac{\partial f\left(\boldsymbol{x}_i, \boldsymbol{w}, \boldsymbol{\pi}\right)}{\partial \pi_{m,\ell}}, \frac{\partial f\left(\boldsymbol{x}_j,\boldsymbol{w}, \boldsymbol{\pi} \right)}{\partial \pi_{m,\ell}}\right\rangle$. 

If the NTK does not change from its initial value during training, one could describe the behavior of functional gradient descent with an infinitesimal step size under the squared loss using kernel ridge-less regression with the NTK \citep{NIPS2018_8076,NEURIPS2019_0d1a9651}, which leads to the theoretical understanding of the training behavior. Such a property gives us a data-dependent generalization bound \citep{10.5555/944919.944944}, which is important in the context of over-parameterization. The kernel does not change from its initial value during the gradient descent with an infinitesimal step size when considering an infinite width neural network~\citep{NIPS2018_8076} under the NTK initialization. Models with the same \emph{limiting NTK}, which is the NTK induced by a model with infinite width or infinitely many weak learners, have exactly equivalent training behavior in function space.

The NTK induced by a soft tree ensemble with infinitely many perfect binary trees, that is, the NTK when $M\to\infty$, is known to be obtained in closed-form at initialization:

\begin{theorem}[\cite{kanoh2022a}] \label{thm:tntk}
Let $\boldsymbol{u} \in \mathbb{R}^{F}$ be any column vector sampled from zero-mean i.i.d.~Gaussians with unit variance. 
The NTK for an ensemble of soft perfect binary trees with tree depth $D$ converges in probability to the following deterministic kernel as $M\to\infty$,
\begin{align}
    {\Theta}^{(D, \text{\upshape{PB}})}(\boldsymbol{x}_i, \boldsymbol{x}_j) &\coloneqq \lim_{M\rightarrow\infty} \widehat{\Theta}^{(D, \text{\upshape{PB}})}_0(\boldsymbol{x}_i, \boldsymbol{x}_j) \nonumber \\
    &= \underbrace{2^D D~\Sigma(\boldsymbol{x}_i, \boldsymbol{x}_j) (\mathcal{T}(\boldsymbol{x}_i, \boldsymbol{x}_j))^{D-1}\dot{\mathcal{T}}(\boldsymbol{x}_i, \boldsymbol{x}_j)}_{\text{\upshape contribution from internal nodes}} + \underbrace{(2\mathcal{T}(\boldsymbol{x}_i, \boldsymbol{x}_j))^D}_{\text{\upshape contribution from leaves}} , \label{eq:tree_ntk}
\end{align}
where $\Sigma(\boldsymbol{x}_i, \boldsymbol{x}_j) \coloneqq \boldsymbol{x}_i^{\top} \boldsymbol{x}_j$, $\mathcal{T}(\boldsymbol{x}_i, \boldsymbol{x}_j) \coloneqq \mathbb{E}[\sigma(\boldsymbol{u}^\top \boldsymbol{x}_i) \sigma(\boldsymbol{u}^\top \boldsymbol{x}_j)]$, and $\dot{\mathcal{T}}(\boldsymbol{x}_i, \boldsymbol{x}_j) \coloneqq \mathbb{E}[\dot{\sigma}(\boldsymbol{u}^\top \boldsymbol{x}_i) \dot{\sigma}(\boldsymbol{u}^\top \boldsymbol{x}_j)]$.
Moreover, when the decision function is the scaled error function, $\mathcal{T}(\boldsymbol{x}_i, \boldsymbol{x}_j)$ and $\dot{\mathcal{T}}(\boldsymbol{x}_i, \boldsymbol{x}_j)$ are analytically obtained in the closed-form as
\begin{align}
    \mathcal{T}(\boldsymbol{x}_i, \boldsymbol{x}_j) &= \!\frac{1}{2 \pi} \arcsin\! \left(\!\frac{\alpha^2\Sigma(\boldsymbol{x}_i, \boldsymbol{x}_j)}{\sqrt{(\alpha^2\Sigma(\boldsymbol{x}_i, \boldsymbol{x}_i)+0.5)(\alpha^2\Sigma(\boldsymbol{x}_j, \boldsymbol{x}_j)+0.5)}}\right) \!+\! \frac{1}{4}, \label{eq:tau_sigmoid1} \\
    \dot{\mathcal{T}}(\boldsymbol{x}_i, \boldsymbol{x}_j) &= \!\frac{\alpha^2}{\pi} \frac{1}{\sqrt{\left(1+2\alpha^2\Sigma(\boldsymbol{x}_i, \boldsymbol{x}_i)\right)(1+2\alpha^2\Sigma(\boldsymbol{x}_j, \boldsymbol{x}_j))\!-\!4\alpha^4\Sigma(\boldsymbol{x}_i, \boldsymbol{x}_j)^{2}}}. \label{eq:tau_sigmoid2}
\end{align}
\end{theorem}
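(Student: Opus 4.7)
The plan is to split $\widehat\Theta^{(D,\text{PB})}_0$ into its \emph{leaf contribution} and its \emph{internal-node contribution}, compute the per-tree expectation of each piece in closed form using the independence of all parameters at initialization, and then invoke the weak law of large numbers over the $M$ trees. For the leaf part, a direct calculation gives $\partial f/\partial \pi_{m,\ell}=\mu_{m,\ell}(\boldsymbol{x},\boldsymbol{w}_m)/\sqrt{M}$, so the leaf contribution to $\widehat\Theta^{(D,\text{PB})}_0(\boldsymbol{x}_i,\boldsymbol{x}_j)$ is $\tfrac{1}{M}\sum_{m=1}^M\sum_{\ell=1}^{2^D}\mu_{m,\ell}(\boldsymbol{x}_i,\boldsymbol{w}_m)\mu_{m,\ell}(\boldsymbol{x}_j,\boldsymbol{w}_m)$. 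Since the $\boldsymbol{w}_{m,n}$ at distinct nodes are independent, the expectation of each $\mu_\ell(\boldsymbol{x}_i)\mu_\ell(\boldsymbol{x}_j)$ factorizes over the $D$ nodes on the leaf's root-to-leaf path, and each factor is either $\mathbb{E}[\sigma(\boldsymbol{u}^\top\boldsymbol{x}_i)\sigma(\boldsymbol{u}^\top\boldsymbol{x}_j)]=\mathcal{T}$ or $\mathbb{E}[(1-\sigma(\boldsymbol{u}^\top\boldsymbol{x}_i))(1-\sigma(\boldsymbol{u}^\top\boldsymbol{x}_j))]$. The latter equals $\mathcal{T}$ as well by the rotational symmetry $1-\sigma(p)=\sigma(-p)$ combined with the symmetry of the Gaussian law of $\boldsymbol{u}$. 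Summing over the $2^D$ leaves gives $(2\mathcal{T})^D$.

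For the internal-node part, I would first expand $\partial f/\partial\boldsymbol{w}_{m,n}=\tfrac{1}{\sqrt{M}}\sum_\ell \pi_{m,\ell}\,\partial\mu_{m,\ell}/\partial\boldsymbol{w}_{m,n}$, observing that $\partial\mu_{m,\ell}/\partial\boldsymbol{w}_{m,n}$ is nonzero only when $n$ lies on the path to $\ell$, and in that case equals $\pm\boldsymbol{x}\,\dot\sigma(\boldsymbol{w}_{m,n}^\top\boldsymbol{x})\prod_{n'\neq n}\text{factor}_{n'}(\boldsymbol{w}_{m,n'}^\top\boldsymbol{x})$ where ``factor'' is $\sigma$ or $1-\sigma$ according to the route. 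Taking the inner product of the gradients at $\boldsymbol{x}_i$ and $\boldsymbol{x}_j$, the cross terms $\ell\neq\ell'$ vanish in expectation because $\pi_{m,\ell}$ and $\pi_{m,\ell'}$ are independent zero-mean. For the surviving diagonal terms, the factor at node $n$ contributes $\mathbb{E}[\dot\sigma(\boldsymbol{u}^\top\boldsymbol{x}_i)\dot\sigma(\boldsymbol{u}^\top\boldsymbol{x}_j)]=\dot{\mathcal{T}}$, each of the remaining $D-1$ path factors contributes $\mathcal{T}$ (same argument as before), and the $\boldsymbol{x}$-factor produces $\Sigma(\boldsymbol{x}_i,\boldsymbol{x}_j)$. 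Counting the $2^D\cdot D$ admissible pairs $(n,\ell)$ in a perfect binary tree of depth $D$ yields $2^D D\,\Sigma\,\dot{\mathcal{T}}\,\mathcal{T}^{D-1}$.

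Combining the two contributions and applying the law of large numbers to the i.i.d.\ sum $\tfrac{1}{M}\sum_{m=1}^M(\cdot)_m$ gives convergence in probability to the claimed deterministic kernel. The closed-form expressions for $\mathcal{T}$ and $\dot{\mathcal{T}}$ when $\sigma(p)=\tfrac{1}{2}\mathrm{erf}(\alpha p)+\tfrac{1}{2}$ follow from standard Gaussian integrals: the product $\mathrm{erf}(\alpha\boldsymbol{u}^\top\boldsymbol{x}_i)\mathrm{erf}(\alpha\boldsymbol{u}^\top\boldsymbol{x}_j)$ has a well-known arcsin expectation formula, and $\dot\sigma(p)=\tfrac{\alpha}{\sqrt{\pi}}e^{-\alpha^2 p^2}$ makes the $\dot{\mathcal{T}}$ integral a two-dimensional Gaussian whose value is standard.

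The main obstacle is the bookkeeping for the internal-node part: one must correctly split the product defining $\mu_{m,\ell}$ so that the derivative at node $n$ cleanly separates into a $\dot\sigma$ factor and the remaining $D-1$ path factors, and show via the $1-\sigma(p)=\sigma(-p)$ symmetry that the sign $\pm 1$ from left/right branching does not spoil the factorization of expectations into $\dot{\mathcal{T}}\,\mathcal{T}^{D-1}$. Once this is handled, counting admissible $(n,\ell)$ pairs and applying the law of large numbers are routine, and the closed forms for $\mathcal{T},\dot{\mathcal{T}}$ follow from classical Gaussian identities.
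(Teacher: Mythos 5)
Your proposal is correct and follows essentially the same route as the paper's own derivation: the paper recovers this theorem as the special case $\mathcal{Q}(D)=2^{D}$ of Theorems~\ref{thm:rule_ntk} and~\ref{thm:any}, whose proofs use exactly your leaf/internal-node split, the factorization of expectations along each root-to-leaf path (a ``rule''), the $2^{D}\cdot D$ count of admissible $(n,\ell)$ pairs, and the vanishing of cross terms from the independent zero-mean leaf parameters. The only cosmetic difference is that you justify $\mathbb{E}[(1-\sigma(\boldsymbol{u}^\top\boldsymbol{x}_i))(1-\sigma(\boldsymbol{u}^\top\boldsymbol{x}_j))]=\mathcal{T}(\boldsymbol{x}_i,\boldsymbol{x}_j)$ via $1-\sigma(p)=\sigma(-p)$ and the sign symmetry of $\boldsymbol{u}$, whereas the paper expands the product and uses $\mathbb{E}[\sigma(\boldsymbol{u}^\top\boldsymbol{x})]=1/2$.
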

Here, ``$\text{\upshape{PB}}$'' stands for a ``P''erfect ``B''inary tree.
The dot used in $\dot{\sigma}(\boldsymbol{u}^\top \boldsymbol{x}_i)$ means the first derivative, and $\mathbb{E}[\cdot]$ means the expectation. The scalar $\pi$ in \Eqref{eq:tau_sigmoid1} and \Eqref{eq:tau_sigmoid2} is the circular constant, and $\boldsymbol{u}$ corresponds to $\boldsymbol{w}_{m,n}$ at any internal nodes. We can derive the formula of the limiting kernel by treating the number of trees in a tree ensemble like the width of the neural network, although the neural network and the soft tree ensemble appear to be different models. 

\section{Theoretical Results}
We first consider rule set ensembles shown in Figure~\ref{fig:schematics}(d) and provide its NTK in Section~\ref{sec:rule}. This becomes the key component to introduce the NTKs for trees with arbitrary architectures in Section~\ref{sec:any}. Due to space limitations, detailed proofs are given in the Appendix. 
\subsection{NTK for Rule Sets} \label{sec:rule}

\begin{figure}
    \centering
    \includegraphics[width=0.7\linewidth]{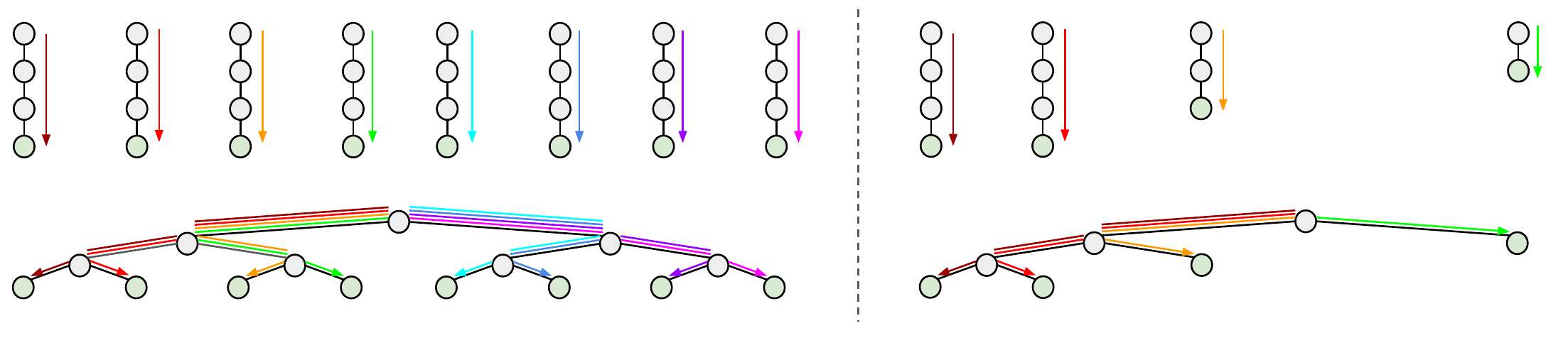}
    \caption{Correspondence between rule sets and binary trees. The top shows the corresponding rule sets for the bottom tree architectures.}
    \label{fig:compare}
\end{figure}
We prove that the NTK induced by a rule set ensemble is obtained in the closed-form as $M\to\infty$ at initialization:
\begin{theorem} \label{thm:rule_ntk}
The NTK for an ensemble of $M$ soft rule sets with the depth $D$ converges in probability to the following deterministic kernel as $M\to\infty$,
\begin{align}
    {\Theta}^{(D, \text{\upshape{Rule}})}(\boldsymbol{x}_i, \boldsymbol{x}_j) &\coloneqq \lim_{M\rightarrow\infty} \widehat{\Theta}^{(D, \text{\upshape{Rule}})}_0(\boldsymbol{x}_i, \boldsymbol{x}_j) \nonumber \\
    &= \underbrace{D~\Sigma(\boldsymbol{x}_i, \boldsymbol{x}_j) (\mathcal{T}(\boldsymbol{x}_i, \boldsymbol{x}_j))^{D-1}\dot{\mathcal{T}}(\boldsymbol{x}_i, \boldsymbol{x}_j)}_{\text{\upshape contribution from internal nodes}} + \underbrace{(\mathcal{T}(\boldsymbol{x}_i, \boldsymbol{x}_j))^D}_{\text{\upshape contribution from leaves}} . \label{eq:rule_ntk}
\end{align}
\end{theorem}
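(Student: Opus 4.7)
The plan is to specialize the general soft-tree formulation of Section~\ref{subsec:softtree} to the rule-set case, compute per-learner parameter gradients in closed form, and then take the $M\to\infty$ limit by the weak law of large numbers, in direct analogy with the proof of Theorem~\ref{thm:tntk}. Following the remark after \Eqref{eq:mu}, replacing the ``right flow'' factor $(1-\sigma(\boldsymbol{w}_{m,n}^\top \boldsymbol{x}_i))$ by zero collapses the sum over leaves in \Eqref{eq:fm} to a single surviving path, so a depth-$D$ rule set has internal-node parameters $\boldsymbol{w}_{m,1},\dots,\boldsymbol{w}_{m,D}$ and a single leaf response $\pi_{m}$, with $f_{m}(\boldsymbol{x}_i)=\pi_{m}\prod_{n=1}^{D}\sigma(\boldsymbol{w}_{m,n}^\top \boldsymbol{x}_i)$.

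First I would differentiate the aggregated model $f=(1/\sqrt{M})\sum_{m}f_{m}$ with respect to each parameter, obtaining $\partial f/\partial \pi_{m}=(1/\sqrt{M})\prod_{n=1}^{D}\sigma(\boldsymbol{w}_{m,n}^\top \boldsymbol{x}_i)$ for the leaf and $\partial f/\partial \boldsymbol{w}_{m,n}=(1/\sqrt{M})\,\pi_{m}\,\dot{\sigma}(\boldsymbol{w}_{m,n}^\top \boldsymbol{x}_i)\,\boldsymbol{x}_i\prod_{k\neq n}\sigma(\boldsymbol{w}_{m,k}^\top \boldsymbol{x}_i)$ for the internal nodes. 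Because each parameter belongs to exactly one learner, the double sum inside \Eqref{eq:ntk} collapses to a single sum over $m\in[M]$, and together with the $(1/\sqrt{M})^{2}=1/M$ prefactor the empirical NTK becomes an average over $m$ of a leaf piece $\prod_{n=1}^{D}\sigma(\boldsymbol{w}_{m,n}^\top \boldsymbol{x}_i)\sigma(\boldsymbol{w}_{m,n}^\top \boldsymbol{x}_j)$ and an internal-node piece $\pi_{m}^{2}\,\Sigma(\boldsymbol{x}_i,\boldsymbol{x}_j)\sum_{n=1}^{D}\dot{\sigma}(\boldsymbol{w}_{m,n}^\top \boldsymbol{x}_i)\dot{\sigma}(\boldsymbol{w}_{m,n}^\top \boldsymbol{x}_j)\prod_{k\neq n}\sigma(\boldsymbol{w}_{m,k}^\top \boldsymbol{x}_i)\sigma(\boldsymbol{w}_{m,k}^\top \boldsymbol{x}_j)$.

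Next, since the $\sigma$-factors lie in $[0,1]$, the $\dot{\sigma}$-factors are bounded on any fixed input pair, and $\pi_{m}$ has finite second moment, the weak law of large numbers gives convergence in probability of both averages to their expectations as $M\to\infty$. Independence of the parameters across the node index $n$, and of $\pi_{m}$ from every $\boldsymbol{w}_{m,n}$, lets each expectation factor into a product of single-node expectations. Using $\mathbb{E}[\sigma(\boldsymbol{u}^\top \boldsymbol{x}_i)\sigma(\boldsymbol{u}^\top \boldsymbol{x}_j)]=\mathcal{T}(\boldsymbol{x}_i,\boldsymbol{x}_j)$, $\mathbb{E}[\dot{\sigma}(\boldsymbol{u}^\top \boldsymbol{x}_i)\dot{\sigma}(\boldsymbol{u}^\top \boldsymbol{x}_j)]=\dot{\mathcal{T}}(\boldsymbol{x}_i,\boldsymbol{x}_j)$, and $\mathbb{E}[\pi_{m}^{2}]=1$, the leaf piece converges to $\mathcal{T}(\boldsymbol{x}_i,\boldsymbol{x}_j)^{D}$, and each of the $D$ summands of the internal-node piece converges to $\Sigma(\boldsymbol{x}_i,\boldsymbol{x}_j)\,\mathcal{T}(\boldsymbol{x}_i,\boldsymbol{x}_j)^{D-1}\dot{\mathcal{T}}(\boldsymbol{x}_i,\boldsymbol{x}_j)$; summing over $n$ and adding the two contributions yields \Eqref{eq:rule_ntk}.

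The hard part is essentially the same subtlety as in Theorem~\ref{thm:tntk}, namely justifying that the weak-law limit legitimately commutes with the products and that all moment bounds are uniform in the input pair $(\boldsymbol{x}_i,\boldsymbol{x}_j)$. The rule-set case is, however, a strictly easier variant of the perfect-binary-tree argument: with only a single surviving path, there is no need to mix $\sigma$- and $(1-\sigma)$-factors across sibling subtrees and no combinatorial multiplicity over leaves, which is precisely why the coefficients $2^{D}$ appearing in Theorem~\ref{thm:tntk} are replaced by $1$ in \Eqref{eq:rule_ntk}.
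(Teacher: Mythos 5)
Your proposal is correct and follows essentially the same route as the paper's proof: decompose the NTK into leaf and internal-node contributions, compute the per-learner gradients along the single surviving path, and use the independence of the node parameters and of $\pi_m$ to factor the expectations into $\mathcal{T}$, $\dot{\mathcal{T}}$, and $\mathbb{E}[\pi_m^2]=1$, with the weak law of large numbers giving convergence in probability of the $M$-average. Your write-up is if anything slightly more explicit than the paper's (which expresses the node derivative via $f_m^{(D-1,\text{Rule})}$ rather than the unrolled product), but the content is identical.
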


We can see that the limiting NTK induced by an infinite ensemble of $2^D$ rules coincides with the limiting NTK of the perfect binary tree in Theorem~\ref{thm:tntk}: $2^D {\Theta}^{(D, \text{\upshape{Rule}})}(\boldsymbol{x}_i, \boldsymbol{x}_j) = {\Theta}^{(D, \text{\upshape{PB}})}(\boldsymbol{x}_i, \boldsymbol{x}_j)$.
Here, $2^D$ corresponds to the number of leaves in a perfect binary tree. Figure~\ref{fig:compare} gives us an intuition: by duplicating internal nodes, we can always construct rule sets that correspond to a given tree by decomposing paths from the root to leaves, where the number of rules in the rule set corresponds to the number of leaves in the tree.

\subsection{NTK for Trees with Arbitrary Architectures} \label{sec:any}
Using our interpretation that a tree is a combination of multiple rule sets, we generalize Theorem~\ref{thm:tntk} to include arbitrary architectures such as an asymmetric tree shown in the right panel of Figure~\ref{fig:compare}.
\begin{theorem} \label{thm:any}
Let $\mathcal{Q}: \mathbb{N}\to\mathbb{N}\cup\{0\}$ be a function that receives any depth and returns the number of leaves connected to internal nodes at the input depth. For any tree architecture, the NTK for an ensemble of soft trees converges in probability to the following deterministic kernel as $M\to\infty$,
\begin{align}
    \Theta^{(\text{\upshape{ArbitraryTree}})}(\boldsymbol{x}_i, \boldsymbol{x}_j) \coloneqq \lim_{M\rightarrow\infty} \widehat{\Theta}^{(\text{\upshape{ArbitraryTree}})}_0(\boldsymbol{x}_i, \boldsymbol{x}_j) = \sum_{d=1}^{D} \mathcal{Q}(d) ~\Theta^{(d,\text{\upshape{Rule}})}(\boldsymbol{x}_i, \boldsymbol{x}_j).
\end{align}
\end{theorem}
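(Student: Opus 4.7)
My strategy is to reduce the computation for an arbitrary tree to a sum of rule-set NTKs by exploiting the independence of the leaf parameters $\pi_{m,\ell}$ at initialization. The key identity I plan to establish is that each leaf of the tree contributes, in the limit $M\to\infty$, exactly as if it were a stand-alone rule of its own depth, regardless of which internal nodes it shares with siblings. Concretely, I would start from the NTK decomposition recalled just above Theorem~\ref{thm:tntk}, splitting $\widehat{\Theta}^{(\text{\upshape{ArbitraryTree}})}_0$ into a leaf contribution (sum over $\pi_{m,\ell}$) and an internal-node contribution (sum over $\boldsymbol{w}_{m,n}$), and passing to the $M\to\infty$ limit separately in each.

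For the leaf contribution, $\partial f/\partial \pi_{m,\ell} = \mu_{m,\ell}(\boldsymbol{x}_i,\boldsymbol{w}_m)/\sqrt{M}$, so by the weak law of large numbers the sum converges in probability to $\sum_\ell \mathbb{E}[\mu_\ell(\boldsymbol{x}_i)\mu_\ell(\boldsymbol{x}_j)]$. Independence of the $\boldsymbol{w}_n$'s across distinct internal nodes factorizes the expectation along the single root-to-leaf path, and the rotational symmetry of $\sigma$ about $(0,1/2)$ combined with the symmetry of the Gaussian initialization gives $\mathbb{E}[(1-\sigma)(1-\sigma)] = \mathbb{E}[\sigma\sigma] = \mathcal{T}$. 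Hence each factor contributes $\mathcal{T}(\boldsymbol{x}_i,\boldsymbol{x}_j)$ whether the path turns left or right, and a leaf at depth $d_\ell$ yields $\mathcal{T}^{d_\ell}$, matching the leaf part of $\Theta^{(d_\ell,\text{\upshape{Rule}})}$.

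For the internal-node contribution, $\partial f/\partial \boldsymbol{w}_{m,n} = (1/\sqrt{M})\sum_\ell \pi_{m,\ell}\,\partial \mu_{m,\ell}/\partial \boldsymbol{w}_{m,n}$, so cross terms indexed by $(\ell,\ell')$ and weighted by $\pi_{m,\ell}\pi_{m,\ell'}$ appear. Here is the pivotal step: because the leaves are initialized as i.i.d.\ zero-mean unit-variance Gaussians, $\mathbb{E}[\pi_\ell \pi_{\ell'}] = \delta_{\ell\ell'}$, so in the $M\to\infty$ limit only the diagonal $\ell=\ell'$ terms survive. The internal-node contribution therefore decouples into an independent sum over leaves, and the piece coming from a single leaf $\ell$ at depth $d_\ell$—summed over the $d_\ell$ nodes on its path—is precisely $d_\ell\,\Sigma(\boldsymbol{x}_i,\boldsymbol{x}_j)\,\mathcal{T}^{d_\ell-1}\dot{\mathcal{T}}$, i.e.\ the internal-node part of $\Theta^{(d_\ell,\text{\upshape{Rule}})}$. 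Combining the two parts per leaf gives $\Theta^{(d_\ell,\text{\upshape{Rule}})}(\boldsymbol{x}_i,\boldsymbol{x}_j)$, and regrouping leaves by depth yields $\sum_{d=1}^{D}\mathcal{Q}(d)\,\Theta^{(d,\text{\upshape{Rule}})}$, as claimed.

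The main obstacle is the decoupling step. Showing that the mean of the cross-leaf terms is zero is immediate from $\mathbb{E}[\pi_\ell\pi_{\ell'}]=0$ for $\ell\neq \ell'$, but to obtain convergence in probability (rather than only in expectation) I must control the variance of $\frac{1}{M}\sum_m \pi_{m,\ell}\pi_{m,\ell'}\,\langle \partial_{\boldsymbol{w}}\mu_{m,\ell}(\boldsymbol{x}_i), \partial_{\boldsymbol{w}}\mu_{m,\ell'}(\boldsymbol{x}_j)\rangle$ and verify that it vanishes as $M\to\infty$, uniformly across the (finitely many) pairs of leaves appearing in the architecture. Boundedness of $\sigma$ and $\dot{\sigma}$ together with Gaussian moment bounds on the parameters should suffice. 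Once this decoupling is justified, Theorem~\ref{thm:rule_ntk} applied leaf by leaf plus the bookkeeping on depths completes the proof.
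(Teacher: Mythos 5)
Your proposal follows essentially the same route as the paper's own proof: split the NTK into leaf and internal-node contributions, use the rotational symmetry of $\sigma$ about $(0,1/2)$ so that left and right turns both contribute $\mathcal{T}$ (and $\dot{\mathcal{T}}$ for derivatives), and invoke $\mathbb{E}[\pi_{m,\ell}\pi_{m,\ell'}]=0$ for $\ell\neq\ell'$ to kill the cross-leaf terms so that each leaf at depth $d$ contributes exactly one copy of $\Theta^{(d,\text{Rule})}$. The variance-control step you flag for upgrading convergence in expectation to convergence in probability is the standard law-of-large-numbers argument the paper leaves implicit, so there is no substantive gap.
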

We can see that this formula covers the limiting NTK for perfect binary trees $2^D {\Theta}^{(D, \text{\upshape{Rule}})}(\boldsymbol{x}_i, \boldsymbol{x}_j)$, 
as a special case by letting $\mathcal{Q}(D)=2^D$ and $0$ otherwise.

\citet{kanoh2022a} used mathematical induction to prove Theorem~\ref{thm:tntk}. However, this technique is limited to perfect binary trees. Consequently, we have now invented an alternative way of deriving the limiting NTK: treating a tree as a combination of independent rule sets using the symmetric properties of the decision function and the statistical independence of the leaf parameters.

It is also possible to show that the limiting kernel does not change during training:
\begin{theorem} \label{thm:freeze}
Let $\lambda_{\text{\upshape min}}$ and $\lambda_{\text{\upshape max}}$ be the minimum and maximum eigenvalues of the limiting NTK. Assume $\| \boldsymbol{x}_i \|_2 = 1$ for all $i \in [N]$ and $\boldsymbol{x}_i \neq \boldsymbol{x}_j~(i\neq j)$. For ensembles of arbitrary soft trees with the NTK initialization trained under gradient flow with a learning rate $\eta < 2/(\lambda_{\text{\upshape min}} + \lambda_{\text{\upshape max}})$ and a positive finite scaling factor $\alpha$, we have, with high probability,
\begin{align}
    \sup\left|{\widehat{{\Theta}}}_{\tau}^{(\text{\upshape{ArbitraryTree}})}\left(\boldsymbol{x}_{i}, \boldsymbol{x}_{j}\right) - \widehat{{{\Theta}}}_{0}^{(\text{\upshape{ArbitraryTree}})}\left(\boldsymbol{x}_{i}, \boldsymbol{x}_{j}\right)\right| = \mathcal{O}\left(\frac{1}{\sqrt{M}}\right).
\end{align}
\end{theorem}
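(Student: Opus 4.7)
The plan is to adapt the standard NTK stability argument of \citet{NEURIPS2019_0d1a9651}, replacing the role of the network width with the number of trees $M$. The proof has three ingredients: (i) concentration of the empirical NTK at initialization to the deterministic limit, which is supplied by Theorem~\ref{thm:any}; (ii) a local Lipschitz bound on the Jacobian of the model with the correct $M$-dependence; and (iii) a bound on the total parameter displacement along the training trajectory.

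First I would establish the Jacobian bounds. Since $\alpha$ is finite, the decision function $\sigma$ together with $\dot\sigma$ and $\ddot\sigma$ are uniformly bounded. Each leaf probability $\mu_{m,\ell}$ from \Eqref{eq:mu} is a product of at most $\mathcal{N}$ factors of the form $\sigma(\boldsymbol{w}_{m,n}^\top \boldsymbol{x}_i)$ or $1-\sigma(\boldsymbol{w}_{m,n}^\top \boldsymbol{x}_i)$, and the per-tree prediction $f_m$ from \Eqref{eq:fm} therefore has uniformly bounded value, gradient, and Hessian with respect to $(\boldsymbol{w}_m, \boldsymbol{\pi}_m)$ on any bounded parameter set (the unit-norm input assumption keeps pre-activations controlled). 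The $1/\sqrt{M}$ aggregation in \Eqref{eq:norm} decouples the Hessian of $f$ into $M$ independent per-tree blocks, each scaled by $1/\sqrt{M}$, so the Jacobian $J(\boldsymbol{\theta}) \in \mathbb{R}^{N\times P}$ satisfies $\|J(\boldsymbol{\theta}_0)\|_F = \mathcal{O}(\sqrt{N})$ together with a local Lipschitz bound $\|J(\boldsymbol{\theta}) - J(\boldsymbol{\theta}_0)\|_F \leq (C/\sqrt{M})\,\|\boldsymbol{\theta} - \boldsymbol{\theta}_0\|_2$ for any $\boldsymbol{\theta}$ in a fixed-radius ball around $\boldsymbol{\theta}_0$, with $C$ independent of $M$.

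Second, I would control the parameter trajectory. By Theorem~\ref{thm:any} and the hypothesis $\lambda_{\min}>0$, the empirical NTK at initialization is positive definite with high probability for sufficiently large $M$, with eigenvalues close to those of the limiting kernel. The learning-rate bound $\eta < 2/(\lambda_{\min}+\lambda_{\max})$ then implies that gradient flow on the squared loss contracts the residual $f_\tau - y$ exponentially, so that $\int_0^\infty \|f_\tau - y\|_2\,d\tau < \infty$. Because $\|\dot{\boldsymbol{\theta}}_\tau\|_2 \leq \|J(\boldsymbol{\theta}_\tau)\|_{\mathrm{op}}\|f_\tau - y\|_2$ and the initial residual is $\mathcal{O}(1)$ thanks to the $1/\sqrt{M}$ aggregation (which kills the $\sqrt{M}$ growth of the unnormalized sum of $M$ tree outputs), the total parameter displacement is $\mathcal{O}(1)$ with high probability. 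Combining this with the Lipschitz bound via
\[
\widehat{\Theta}_\tau - \widehat{\Theta}_0 = \langle J(\boldsymbol{\theta}_\tau),\, J(\boldsymbol{\theta}_\tau)-J(\boldsymbol{\theta}_0)\rangle + \langle J(\boldsymbol{\theta}_\tau)-J(\boldsymbol{\theta}_0),\, J(\boldsymbol{\theta}_0)\rangle
\]
yields the claimed $\mathcal{O}(1/\sqrt{M})$ bound entrywise, with $N$ fixed and absorbed into the constant.

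The main obstacle is Step~(ii): obtaining the Jacobian Lipschitz bound with the correct $M$-scaling. Unlike fully-connected networks, where the standard proof exploits a layer-by-layer recursion, a soft tree couples its parameters multiplicatively through $\mu_{m,\ell}$ across all $\mathcal{N}$ internal nodes, so one must carefully track how differentiating once exposes a single $\dot\sigma$ factor while leaving the remaining $\sigma$ factors intact, and differentiating twice produces a sum over pairs of internal nodes containing $\dot\sigma\dot\sigma$ or $\ddot\sigma$ terms. The $\boldsymbol{w}$-block and $\boldsymbol{\pi}$-block of the Jacobian must be handled separately, since the leaf parameters enter $f_m$ only linearly and hence contribute no second-order terms. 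Once this per-tree Hessian bound is in place, the block-diagonal structure across trees together with the $1/\sqrt{M}$ aggregation immediately delivers the $1/\sqrt{M}$ Lipschitz constant, and the remainder of the argument follows the by-now-standard gradient-flow stability template.
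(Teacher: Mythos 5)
Your overall route is the same as the paper's: both reduce the theorem to the stability template of \citet{NEURIPS2019_0d1a9651}, which requires (a) a local Lipschitz bound on the model Jacobian with the right $M$-scaling and (b) positive definiteness of the limiting NTK, the latter feeding the exponential contraction of the residual and hence the control of the parameter displacement. The paper imports both the template and the Jacobian Lipschitzness as black-box lemmas (the Lipschitz bound is quoted from \citet{kanoh2022a} and already holds for arbitrary architectures), whereas you sketch the Lipschitz bound from first principles via the block-diagonal, $1/\sqrt{M}$-scaled per-tree Hessians; your bookkeeping there (Lipschitz constant $C/\sqrt{M}$, displacement $\mathcal{O}(1)$) is a consistent way to land on the $\mathcal{O}(1/\sqrt{M})$ kernel drift, and you correctly flag the multiplicative coupling through $\mu_{m,\ell}$ as the technical crux. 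One caveat: the per-tree Hessian involves factors of $\pi_{m,\ell}$, which at initialization are unbounded Gaussians, so ``uniformly bounded on any bounded parameter set'' needs a high-probability $\max_{m,\ell}|\pi_{m,\ell}|$ bound rather than a deterministic one.

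The genuine gap is in item (b): you invoke ``the hypothesis $\lambda_{\min}>0$,'' but positive definiteness of the limiting NTK is \emph{not} a hypothesis of the theorem --- the stated assumptions are only $\|\boldsymbol{x}_i\|_2=1$ and $\boldsymbol{x}_i\neq\boldsymbol{x}_j$, and deriving $\lambda_{\min}>0$ from them is the only architecture-specific content of the proof. Without it, the residual need not contract, the trajectory length $\int_0^\infty\|f_\tau-y\|_2\,d\tau$ need not be finite, and the whole displacement argument collapses. The paper closes this by combining Theorem~\ref{thm:any} with the known positive definiteness of the perfect-binary-tree kernel (Theorem~\ref{thm:tntk}): since $\Theta^{(d,\text{Rule})}=2^{-d}\,\Theta^{(d,\text{PB})}$ is positive definite under the stated input assumptions, and $\Theta^{(\text{ArbitraryTree})}=\sum_d \mathcal{Q}(d)\,\Theta^{(d,\text{Rule})}$ is a nonnegative combination with at least one $\mathcal{Q}(d)>0$, the arbitrary-tree kernel inherits positive definiteness. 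This is a short argument once the rule-set decomposition is in hand, but it must appear; assuming $\lambda_{\min}>0$ outright leaves the unit-norm and distinctness assumptions doing no work and the theorem unproved as stated.
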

Therefore, we can analyze the training behavior based on kernel regression.

Each rule set corresponds to a path to a leaf in a tree, as shown in Figure~\ref{fig:compare}. Therefore, the depth of a rule set corresponds to the depth at which a leaf is present.
Since Theorem~\ref{thm:any} tells us that the limiting NTK depends on only the number of leaves at each depth with respect to tree architecture, the following holds:
\begin{corollary}\label{coro:isomorphic}
The same limiting NTK can be induced from trees that are not isomorphic.
\end{corollary}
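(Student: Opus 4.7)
The plan is to invoke Theorem~\ref{thm:any} directly: it asserts that the limiting NTK of a soft tree ensemble depends on the tree architecture only through the leaf-depth profile $\mathcal{Q}(d)$. Consequently the corollary reduces to a purely combinatorial statement, namely that $\mathcal{Q}$ does not determine a rooted tree up to isomorphism, so I need only exhibit two non-isomorphic trees sharing the same $\mathcal{Q}$ and conclude that their limiting NTKs coincide.

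To construct such a pair, I would fix $D=3$ and target the profile $\mathcal{Q}(2)=2$, $\mathcal{Q}(3)=4$, $\mathcal{Q}(d)=0$ otherwise. Define:
\begin{itemize}[leftmargin=*]
\item Tree $A$: the root has two children; the left child is an internal node with two leaves (giving $2$ leaves at depth $2$), and the right child is an internal node whose two children are themselves internal nodes each carrying two leaves (giving $4$ leaves at depth $3$).
\item Tree $B$: the root has two children, each of which is an internal node with one leaf at depth $2$ and one internal child whose two children are leaves at depth $3$.
\end{itemize}
Both satisfy $\mathcal{Q}(2)=2$ and $\mathcal{Q}(3)=4$, but they are non-isomorphic as rooted trees: any isomorphism must preserve the multiset of subtrees rooted at the children of the root, and in $A$ the two children-subtrees have disjoint leaf-depth profiles ($\{2,2\}$ versus $\{3,3,3,3\}$) while in $B$ the two children-subtrees share the identical profile $\{2,3,3\}$. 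Applying Theorem~\ref{thm:any} to each gives $\Theta^{(A)} = 2\,\Theta^{(2,\text{Rule})} + 4\,\Theta^{(3,\text{Rule})} = \Theta^{(B)}$.

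There is essentially no obstacle in the argument: Theorem~\ref{thm:any} supplies all the analytic content, and the remaining step is to verify that the non-isomorphism of $A$ and $B$ is genuine rather than an artifact of how one labels children. I would handle this by appealing to the standard fact that an isomorphism of rooted (unordered) trees induces, at each node, a bijection of child-subtrees up to isomorphism; the distinct multisets of subtree profiles at the roots of $A$ and $B$ then rule out any such bijection.
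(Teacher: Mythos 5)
Your proposal is correct and follows the same route as the paper: the paper likewise derives the corollary immediately from Theorem~\ref{thm:any} (the limiting NTK depends on the architecture only through $\mathcal{Q}$) and then exhibits two non-isomorphic trees with $\mathcal{Q}(2)=2$, $\mathcal{Q}(3)=4$ (Figure~\ref{fig:arch}), exactly the profile you target. Your explicit verification that the two trees are non-isomorphic via the multisets of child-subtree profiles is a careful touch the paper leaves to the figure, but the substance of the argument is identical.
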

\begin{figure}
    \centering
    \includegraphics[width=0.4\linewidth]{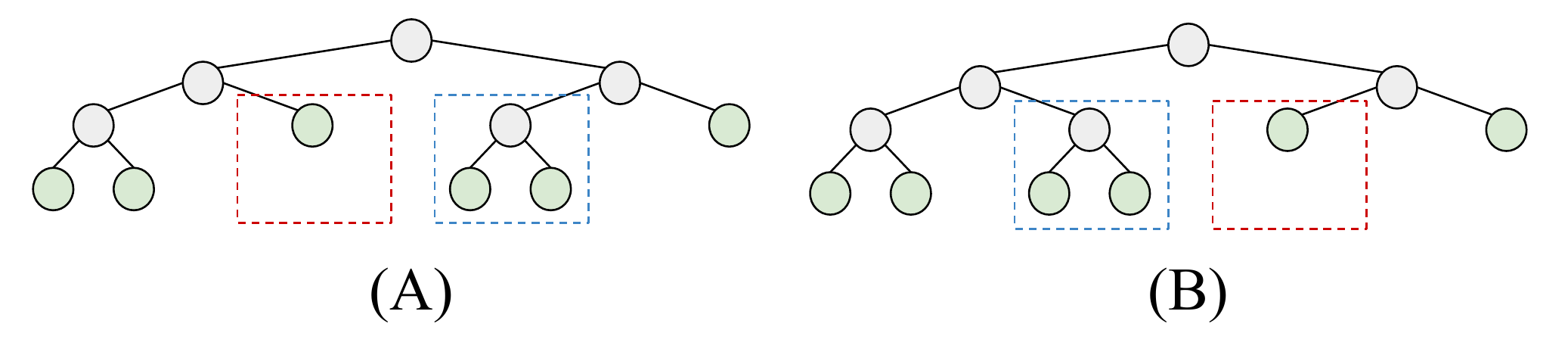}
    \caption{Non-isomorphic tree architectures used in ensembles that induce the same limiting NTK.}
    \label{fig:arch}
\end{figure}
For example, for two trees illustrated in Figure~\ref{fig:arch}, $\mathcal{Q}(1)=0$, $\mathcal{Q}(2)=2$, and $\mathcal{Q}(3)=4$. Therefore, the limiting NTKs are identical for ensembles of these trees and become $2{\Theta}^{(2, \text{\upshape{Rule}})}(\boldsymbol{x}_i, \boldsymbol{x}_j) + 4{\Theta}^{(3, \text{\upshape{Rule}})}(\boldsymbol{x}_i, \boldsymbol{x}_j)$.
Since they have the same limiting NTKs, their training behaviors in function space and generalization performances are exactly equivalent when we consider infinite ensembles, although they are not isomorphic and were expected to have different properties.
\begin{figure}
    \centering
    \includegraphics[width=0.8\linewidth]{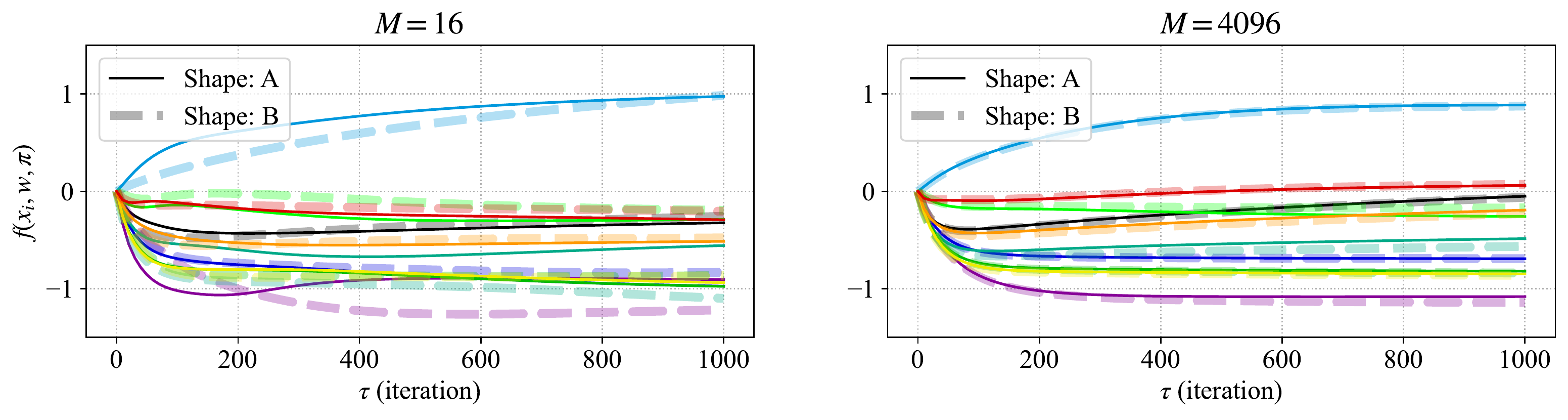}
    \caption{Output dynamics for test data points. Each line color corresponds to each data point.}
    \label{fig:isomorphic}
\end{figure}

To see this phenomenon empirically, we trained two types of ensembles; one is composed of soft trees in the left architecture in Figure~\ref{fig:arch} and the other is in the right-hand-side architecture in Figure~\ref{fig:arch}. We tried two settings, $M = 16$ and $M = 4096$, to see the effect of the number of trees (weak learners). The decision function is a scaled error function with $\alpha=2.0$. 
Figure~\ref{fig:isomorphic} shows trajectories during full-batch gradient descent with a learning rate of $0.1$. Initial outputs are shifted to zero \citep{NEURIPS2019_ae614c55}. There are $10$ randomly generated training points and $10$ randomly generated test data points, and their dimensionality $F=5$. 
Each line corresponds to each data point, and solid and dotted lines denote ensembles of left and right architecture, respectively.
This result shows that two trajectories (solid and dotted lines for each color) become similar if $M$ is large, meaning that the property shown in Corollary~\ref{coro:isomorphic} is empirically effective.

When we compare a rule set and a tree under the same number of leaves as shown in Figure~\ref{fig:schematics}(a) and (d), it is clear that the rule set has a larger representation power as it has more internal nodes and no decision boundaries are shared. However, when the collection of paths from the root to leaves in a tree is the same as the corresponding rule set as shown in Figure~\ref{fig:compare}, their limiting NTKs are equivalent. Therefore, the following corollary holds:
\begin{corollary}\label{coro:boundary}
Sharing of decision boundaries through parameter sharing does not affect the limiting NTKs.
\end{corollary}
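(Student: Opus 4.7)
The plan is to obtain Corollary~\ref{coro:boundary} as an immediate consequence of Theorem~\ref{thm:any} together with the observation that ``unsharing'' a tree into its constituent root-to-leaf paths preserves the leaf-count-per-depth function $\mathcal{Q}$. Concretely, given any tree architecture $T$ with leaves $\ell = 1, \dots, \mathcal{L}$, I would define its path-decomposed rule set $R(T)$ as follows: for each leaf $\ell$ of $T$, take the path from the root to $\ell$ and make it an independent rule in $R(T)$, duplicating every internal node along the way so that no parameter is shared with any other rule. This is exactly the construction illustrated in Figure~\ref{fig:compare}. By definition, $R(T)$ has one rule per leaf of $T$ and the depth of each rule equals the depth of the corresponding leaf, so
\begin{align*}
\mathcal{Q}_{R(T)}(d) = \mathcal{Q}_{T}(d) \quad \text{for all } d \in \{1, \dots, D\}.
\end{align*}

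Next, I would apply Theorem~\ref{thm:any} to both ensembles. The theorem asserts that the limiting NTK of an infinite ensemble of soft trees with any fixed architecture equals $\sum_{d=1}^{D} \mathcal{Q}(d)\, \Theta^{(d, \text{Rule})}(\boldsymbol{x}_i, \boldsymbol{x}_j)$, and thus depends on the architecture only through $\mathcal{Q}$. Since $\mathcal{Q}_T = \mathcal{Q}_{R(T)}$, the two ensembles induce identical limiting NTKs. Because $T$ shares internal-node parameters across leaves while $R(T)$ does not, this is precisely the statement that sharing of decision boundaries through parameter sharing leaves the limiting kernel invariant.

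The one conceptual point worth spelling out is why the conclusion is not vacuous. In finite ensembles, parameter sharing is manifestly not innocuous: a shared internal weight $\boldsymbol{w}_{m,n}$ lying on the path to several leaves produces gradient contributions that aggregate across those leaves, so a priori the NTK should contain cross-terms between leaves in the same subtree. The reason these cross-terms do not survive in the limit is already built into the derivation of Theorem~\ref{thm:any}: the leaf responses $\pi_{m,\ell}$ are initialized as independent zero-mean Gaussians, so in the $M \to \infty$ limit any cross-term involving two distinct leaves' leaf parameters concentrates to zero, and the only surviving contributions are those pairing each leaf with itself---which is exactly the rule-set decomposition encoded by $\mathcal{Q}$.

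The main obstacle, then, is purely conceptual rather than computational: making precise what ``sharing of decision boundaries through parameter sharing'' means, and matching the architectures $T$ and $R(T)$ so that Theorem~\ref{thm:any} applies to both under the same $\mathcal{Q}$. No new limit, concentration inequality, or kernel computation is required beyond what Theorem~\ref{thm:any} already delivers, so the corollary reduces to a one-line invocation of that theorem once the bookkeeping above is in place.
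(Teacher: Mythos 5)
Your proposal is correct and follows essentially the same route as the paper: decompose the tree into its root-to-leaf paths, observe that this preserves the leaf-count-per-depth function $\mathcal{Q}$, and invoke Theorem~\ref{thm:any} (whose proof already kills the cross-leaf terms via the independence of the $\pi_{m,\ell}$ at initialization, \Eqref{eq:independence}) to conclude that the tree and its unshared rule-set counterpart induce the same limiting kernel. The only cosmetic point is that for the rule-set side one should cite Theorem~\ref{thm:rule_ntk} plus additivity over parameter-disjoint rules rather than Theorem~\ref{thm:any} itself, but this is exactly the bookkeeping the paper also elides.
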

This result generalizes the result in \citep{kanoh2022a}, which shows that the kernel induced by an oblivious tree, as shown in Figure~\ref{fig:schematics}(b), converges to the same kernel induced by a non-oblivious one, as shown in Figure~\ref{fig:schematics}(a), in the limit of infinite trees.

\section{Case Study: Decision List} \label{sec:decisionlist}

As a typical example of asymmetric trees, we consider a tree that grows in only one direction, as shown in Figure~\ref{fig:asymtree}, often called a \emph{decision list}~\citep{10.1023/A:1022607331053} and commonly used in practical applications~\citep{10.1214/15-AOAS848}. In this architecture, one leaf exists at each depth, except for leaves at the final depth, where there are two leaves.

\begin{figure}
    \centering
    \includegraphics[width=0.4\linewidth]{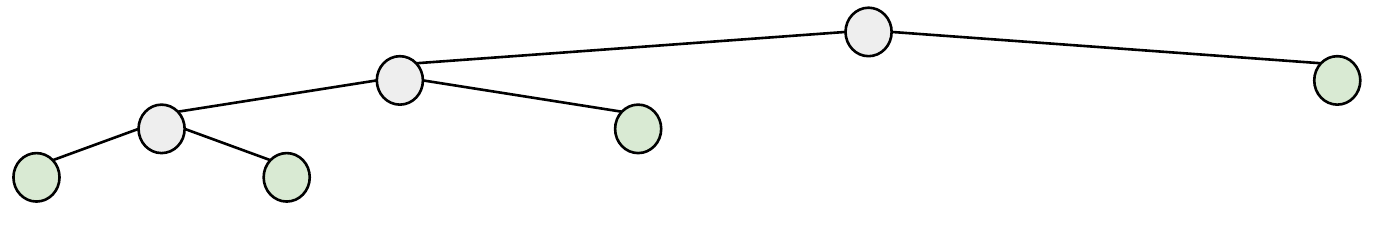}
    \caption{Decision list: a binary tree that grows in only one direction.}
    \label{fig:asymtree}
\end{figure}

\subsection{NTK for Decision Lists}
\begin{figure}
    \centering
    \includegraphics[width=0.7\linewidth]{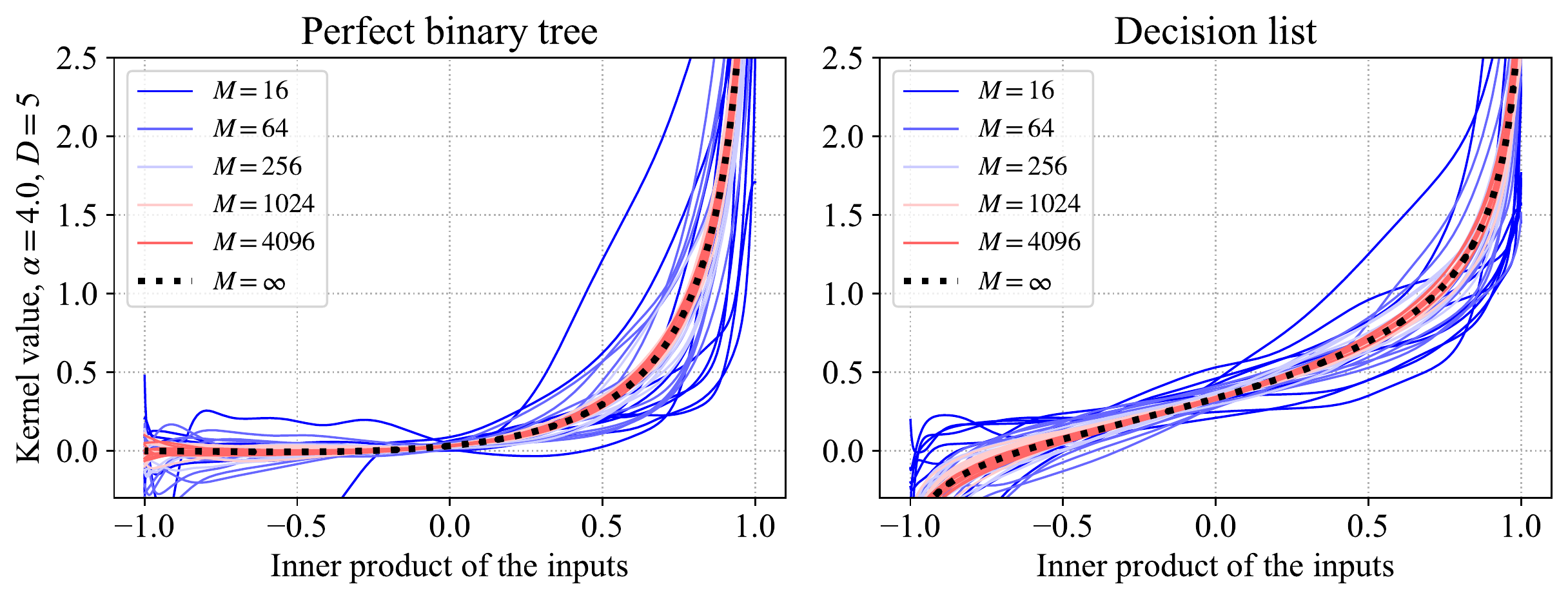}
    \caption{An empirical demonstration for (Left) perfect binary tree and (Right) decision list ensembles on the convergence of $\widehat{\Theta}^{(5,\text{\upshape{PB}})}_0(\boldsymbol{x}_i, \boldsymbol{x}_j)$ and $\widehat{\Theta}^{(5,\text{\upshape{DL}})}_0(\boldsymbol{x}_i, \boldsymbol{x}_j)$ to the fixed limit ${\Theta}^{(5,\text{\upshape{PB}})}(\boldsymbol{x}_i, \boldsymbol{x}_j)$ and ${\Theta}^{(5,\text{\upshape{DL}})}(\boldsymbol{x}_i, \boldsymbol{x}_j)$ as $M$ increases. The kernel induced by finite trees is numerically calculated and plotted $10$ times with parameter re-initialization.}
    \label{fig:asym_finite}
\end{figure}

We show that the NTK induced by decision lists is formulated in closed-form as $M\to\infty$ at initialization:
\begin{proposition} \label{pro:dl}
The NTK for an ensemble of soft decision lists with the depth $D$ converges in probability to the following deterministic kernel as $M\to\infty$,
\begin{align}
    \Theta^{(D,\text{\upshape{DL}})}(\boldsymbol{x}_i, \boldsymbol{x}_j)
    \coloneqq&\lim_{M\rightarrow\infty} \widehat{\Theta}^{(D, \text{\upshape{DL}})}_0(\boldsymbol{x}_i, \boldsymbol{x}_j) \nonumber \\
    =&\ {\Theta}^{(1, \text{\upshape{Rule}})}(\boldsymbol{x}_i, \boldsymbol{x}_j) + {\Theta}^{(2, \text{\upshape{Rule}})}(\boldsymbol{x}_i, \boldsymbol{x}_j) +\cdots+ 2{\Theta}^{(D, \text{\upshape{Rule}})}(\boldsymbol{x}_i, \boldsymbol{x}_j) \nonumber \\
    =&\underbrace{\Sigma\left(\boldsymbol{x}_{i}, \boldsymbol{x}_{j}\right) \dot{\mathcal{T}}\left(\boldsymbol{x}_{i}, \boldsymbol{x}_{j}\right)\left(\sum_{d=1}^{D}\left(d\left(\mathcal{T}\left(\boldsymbol{x}_{i}, \boldsymbol{x}_{j}\right)\right)^{d-1}\right)+D\left(\mathcal{T}\left(\boldsymbol{x}_{i}, \boldsymbol{x}_{j}\right)\right)^{D-1}\right)}_{\text{\upshape{contribution from internal nodes}}} \nonumber \\
    &\qquad +\underbrace{\sum_{d=1}^{D}\left((\mathcal{T}\left(\boldsymbol{x}_{i}, \boldsymbol{x}_{j}\right))^d\right) + (\mathcal{T}\left(\boldsymbol{x}_{i}, \boldsymbol{x}_{j}\right))^D}_{\text{\upshape{contribution from leaves}}}. \label{eq:asym_tntk}
\end{align}
\end{proposition}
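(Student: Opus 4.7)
The plan is to obtain Proposition \ref{pro:dl} as a direct corollary of Theorem \ref{thm:any} combined with Theorem \ref{thm:rule_ntk}, so the only real work is to identify the leaf-count function $\mathcal{Q}$ for a decision list and then algebraically rearrange the resulting sum.

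First, I would read $\mathcal{Q}$ off Figure \ref{fig:asymtree}. In a depth-$D$ decision list, each of the internal nodes at depths $1,2,\ldots,D-1$ has exactly one leaf child (with its other child continuing the chain deeper), while the unique internal node at depth $D-1$ contributes two leaf children at depth $D$. Hence
\begin{align*}
    \mathcal{Q}(d) = 1 \quad \text{for } d = 1,\ldots,D-1, \qquad \mathcal{Q}(D) = 2,
\end{align*}
and $\mathcal{Q}(d)=0$ otherwise. Plugging this into Theorem \ref{thm:any} immediately yields
\begin{align*}
    \Theta^{(D,\text{\upshape DL})}(\boldsymbol{x}_i,\boldsymbol{x}_j) = \sum_{d=1}^{D-1} \Theta^{(d,\text{\upshape Rule})}(\boldsymbol{x}_i,\boldsymbol{x}_j) + 2\,\Theta^{(D,\text{\upshape Rule})}(\boldsymbol{x}_i,\boldsymbol{x}_j),
\end{align*}
which is precisely the middle line in the statement, and in particular gives the convergence in probability as $M\to\infty$ for free.

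Next, I would substitute the closed form $\Theta^{(d,\text{\upshape Rule})} = d\,\Sigma\,\mathcal{T}^{d-1}\dot{\mathcal{T}} + \mathcal{T}^d$ from Theorem \ref{thm:rule_ntk} and split contributions by type. The internal-node piece becomes
\begin{align*}
    \Sigma\,\dot{\mathcal{T}}\left(\sum_{d=1}^{D-1} d\,\mathcal{T}^{d-1} + 2D\,\mathcal{T}^{D-1}\right) = \Sigma\,\dot{\mathcal{T}}\left(\sum_{d=1}^{D} d\,\mathcal{T}^{d-1} + D\,\mathcal{T}^{D-1}\right),
\end{align*}
which matches the internal-node bracket in \Eqref{eq:asym_tntk}. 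The leaf piece becomes
\begin{align*}
    \sum_{d=1}^{D-1} \mathcal{T}^d + 2\,\mathcal{T}^D = \sum_{d=1}^{D} \mathcal{T}^d + \mathcal{T}^D,
\end{align*}
matching the leaf bracket. This closes the argument.

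Since the content of the proposition is essentially a specialization of already-proved general results, there is no serious obstacle; the only place to be careful is the boundary accounting at depth $D$, where the ``doubling'' of the final-depth leaves comes from the decision list terminating with two leaf children instead of one leaf plus one continuing subtree, and the re-indexing $\sum_{d=1}^{D-1} d\,\mathcal{T}^{d-1} + 2D\,\mathcal{T}^{D-1} = \sum_{d=1}^{D} d\,\mathcal{T}^{d-1} + D\,\mathcal{T}^{D-1}$ must be executed cleanly for both the internal-node and leaf sums. Everything else is inherited from Theorems \ref{thm:any} and \ref{thm:rule_ntk}.
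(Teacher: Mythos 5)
Your proposal is correct and follows exactly the route the paper takes: the paper justifies Proposition~\ref{pro:dl} by noting that the first equality is an instance of Theorem~\ref{thm:any} with $\mathcal{Q}(d)=1$ for $d<D$ and $\mathcal{Q}(D)=2$, and the closed form then follows by substituting Theorem~\ref{thm:rule_ntk} and re-indexing, just as you do. Your boundary accounting at depth $D$ (absorbing $2D\,\mathcal{T}^{D-1}$ into $\sum_{d=1}^{D}d\,\mathcal{T}^{d-1}+D\,\mathcal{T}^{D-1}$, and likewise for the leaf sum) reproduces \Eqref{eq:asym_tntk} exactly.
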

In Proposition~\ref{pro:dl}, ``$\text{\upshape{DL}}$'' stands for a ``D''ecision ``L''ist.
The first equation comes from Theorem~\ref{thm:any}.

We numerically demonstrate the convergence of the kernels for perfect binary trees and decision lists in Figure~\ref{fig:asym_finite} when the number $M$ of trees gets larger.
We use two simple inputs: $\boldsymbol{x}_i = \{1, 0\}$ and $\boldsymbol{x}_j = \{\cos(\beta),\sin(\beta)\}$ with $\beta = [0, \pi]$. The scaled error function is used as a decision function. The kernel induced by finite trees is numerically calculated $10$ times with parameter re-initialization for each of $M=16$, $64$, $256$, $1024$, and $4096$.
We empirically observe that the kernels induced by sufficiently many soft trees converge to the limiting kernel given in \Eqref{eq:tree_ntk} and \Eqref{eq:asym_tntk} shown by the dotted lines in Figure~\ref{fig:asym_finite}.
The kernel values induced by a finite ensemble are already close to the limiting NTK if the number of trees is larger than several hundred, which is a typical order of the number of trees in practical applications~\citep{Popov2020Neural}.
This indicates that our NTK analysis is also effective in practical applications with finite ensembles.

\subsection{Degeneracy} \label{subsec:degeneracy}

Next, we analyze the effect of the tree depth to the kernel values.
It is known that overly deep soft perfect binary trees induce the \emph{degeneracy} phenomenon~\citep{kanoh2022a}, and we analyzed whether or not this phenomenon also occurs in asymmetric trees like decision lists.
Since $0 < \mathcal{T}\left(\boldsymbol{x}_{i}, \boldsymbol{x}_{j}\right) < 0.5$, replacing the summation in \Eqref{eq:asym_tntk} with an infinite series, we can obtain the closed-form formula when the depth $D \to \infty$ in the case of decision lists:
\begin{proposition}\label{pro:inf}
The NTK for an ensemble of soft decision lists with an infinite depth converges in probability to the following deterministic kernel as $M\to\infty$,
\begin{align}
    \lim_{D\to\infty}\Theta^{(D, \text{\upshape{DL}})}\left(\boldsymbol{x}_{i}, \boldsymbol{x}_{j}\right)= \underbrace{\frac{\Sigma\left(\boldsymbol{x}_{i}, \boldsymbol{x}_{j}\right) \dot{\mathcal{T}}\left(\boldsymbol{x}_{i}, \boldsymbol{x}_{j}\right)}{\left(1-\mathcal{T}\left(\boldsymbol{x}_{i}, \boldsymbol{x}_{j}\right)\right)^2}}_{\text{\upshape{contribution from internal nodes}}} + \underbrace{\frac{\mathcal{T}\left(\boldsymbol{x}_{i}, \boldsymbol{x}_{j}\right)}{1-\mathcal{T}\left(\boldsymbol{x}_{i}, \boldsymbol{x}_{j}\right)}}_{\text{\upshape{contribution from leaves}}}.
\end{align}
\end{proposition}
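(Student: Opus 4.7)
The plan is to take the finite-depth closed form from Proposition~\ref{pro:dl} as a starting point and show that each of its four pieces has a well-defined $D\to\infty$ limit that matches the claimed expression. Write $t \coloneqq \mathcal{T}(\boldsymbol{x}_i, \boldsymbol{x}_j)$ for brevity. The hypothesis stated just above the proposition, $0 < t < 1/2$, is what makes everything go through: it comes from the properties of the decision function (rotationally symmetric about $(0,1/2)$ with $\sigma(0)=1/2$ and the stated limits at $\pm\infty$), which force $\mathcal{T}$ to lie strictly inside $(0,1/2)$.

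With this bound in hand, I would split the finite-$D$ formula into four groups and take limits term by term. The internal-node contribution contains a polynomial sum $\sum_{d=1}^{D} d\, t^{d-1}$, which is a standard derivative-of-geometric-series identity converging to $1/(1-t)^2$ as $D\to\infty$, plus a boundary term $D\, t^{D-1}$ that vanishes because $t<1/2<1$ makes $D\, t^{D-1}\to 0$. The leaf contribution contains $\sum_{d=1}^{D} t^d$, a geometric series with limit $t/(1-t)$, plus another boundary term $t^D\to 0$. Assembling the surviving pieces and multiplying the first sum by the common prefactor $\Sigma(\boldsymbol{x}_i,\boldsymbol{x}_j)\,\dot{\mathcal{T}}(\boldsymbol{x}_i,\boldsymbol{x}_j)$ produces exactly the two terms in the statement.

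There is essentially no obstacle beyond justifying the interchange of limits. One thing worth being careful about is that the convergence in Proposition~\ref{pro:dl} is \emph{in probability} for each fixed $D$ as $M\to\infty$, while the present claim takes $D\to\infty$ of the already-limiting (deterministic) kernel. So the cleanest logical order is: fix $D$, take $M\to\infty$ via Proposition~\ref{pro:dl} to obtain the deterministic expression \Eqref{eq:asym_tntk}; then take the deterministic scalar limit as $D\to\infty$ using the geometric-series identities above. No double-limit or uniformity argument is needed because the claim is literally $\lim_{D\to\infty}\Theta^{(D,\text{DL})}$, and both ingredients $\Sigma$ and $\dot{\mathcal{T}}$ are $D$-independent quantities acting as constant prefactors.

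If a reader wanted a slightly more explicit presentation, one could package the two sums by noting $\sum_{d=1}^{\infty}d\,t^{d-1}=\frac{d}{dt}\sum_{d=0}^{\infty}t^{d}=\frac{1}{(1-t)^2}$ and $\sum_{d=1}^{\infty}t^{d}=\frac{t}{1-t}$, both valid exactly because $|t|<1$. The only place I would double-check the bookkeeping is in the internal-node contribution, where the extra ``$+D(\mathcal{T})^{D-1}$'' term (which encodes the extra leaf at the deepest level of the decision list) could be mistakenly absorbed into the series; keeping it separate and noting it decays to zero avoids that pitfall.
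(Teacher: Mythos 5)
Your proposal is correct and follows the paper's own route: the paper likewise derives this limit directly from \Eqref{eq:asym_tntk} by noting $0 < \mathcal{T}(\boldsymbol{x}_i,\boldsymbol{x}_j) < 0.5$ and replacing the finite sums with their infinite-series limits, which is precisely your geometric-series argument with the boundary terms $D\,\mathcal{T}^{D-1}$ and $\mathcal{T}^{D}$ vanishing. Your remark about ordering the limits ($M\to\infty$ first via Proposition~\ref{pro:dl}, then the deterministic scalar limit in $D$) is a sensible clarification of what the paper leaves implicit.
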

Thus the limiting NTK $\Theta^{(D, \text{\upshape{DL}})}$ of decision lists neither degenerates nor diverges as $D \to \infty$.

\begin{figure}
    \centering
    \includegraphics[width=0.7\linewidth]{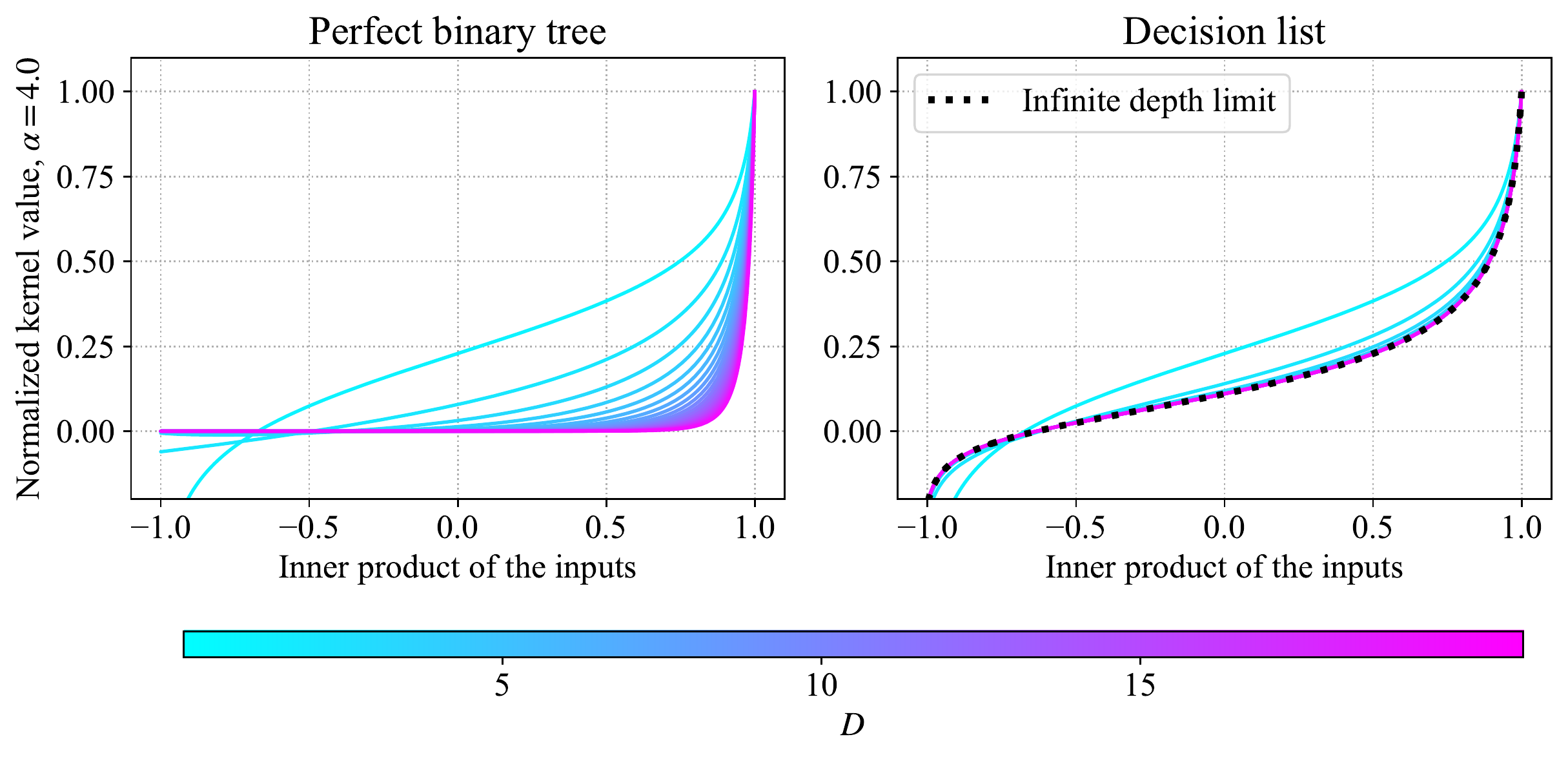}
    \caption{Depth dependency of  (Left) ${\Theta}^{(D,\text{\upshape{PB}})}(\boldsymbol{x}_i, \boldsymbol{x}_j)$ and (Right) ${\Theta}^{(D,\text{\upshape{DL}})}(\boldsymbol{x}_i, \boldsymbol{x}_j)$. For decision lists, the limit of infinite depth is indicated by the dotted line.}
    \label{fig:asym_infdepth}
\end{figure}

Figure~\ref{fig:asym_infdepth} shows how the kernel changes as depth changes.
In the case of the perfect binary tree, the kernel value sticks to zero as the inner product of the input gets farther from $1.0$~\citep{kanoh2022a}, whereas in the decision list case, the kernel value does not stay at zero. In other words, deep perfect binary trees cannot distinguish between vectors with a $90$-degree difference in angle and vectors with a $180$-degree difference in angle. Meanwhile, even if the decision list becomes infinitely deep, the kernel does not degenerate as shown by the dotted line in the right panel of Figure~\ref{fig:asym_infdepth}. This implies that a deterioration in generalization performance is not likely to occur even if the model gets infinitely deep. We can understand such behavior intuitively from the following reasoning. When the depth of the perfect binary tree is infinite, all splitting regions become infinitely small, meaning that every data point falls into a unique leaf. In contrast, when a decision list is used, large splitting regions remain, so not all data are separated. This can avoid the phenomenon of separating data being equally distant.

\subsection{Numerical Experiments} \label{sec:exp}
We experimentally examined the effects of the degeneracy phenomenon discussed in Section~\ref{subsec:degeneracy}.

{\bf Setup.}
We used $90$ classification tasks in the UCI database~\citep{Dua:2019}, each of which has fewer than $5000$ data points as in \citep{Arora2020Harnessing}.
We performed kernel regression using the limiting NTK defined in \Eqref{eq:tree_ntk} and \Eqref{eq:asym_tntk}, equivalent to the infinite ensemble of the perfect binary trees and decision lists. We used $D$ in $\{2,4,8,16,32,64,128\}$ and $\alpha$ in $\{1.0, 2.0, 4.0, 8.0, 16.0, 32.0\}$. The scaled error function is used as a decision function.
To consider the ridge-less situation, regularization strength is fixed to $1.0\times10^{-8}$. We followed the procedures used by \citet{Arora2020Harnessing,JMLR:v15:delgado14a}: We report four-fold cross-validation performance with random data splitting. Other details are provided in the Appendix.

\begin{figure}
    \centering
    \includegraphics[width=0.7\linewidth]{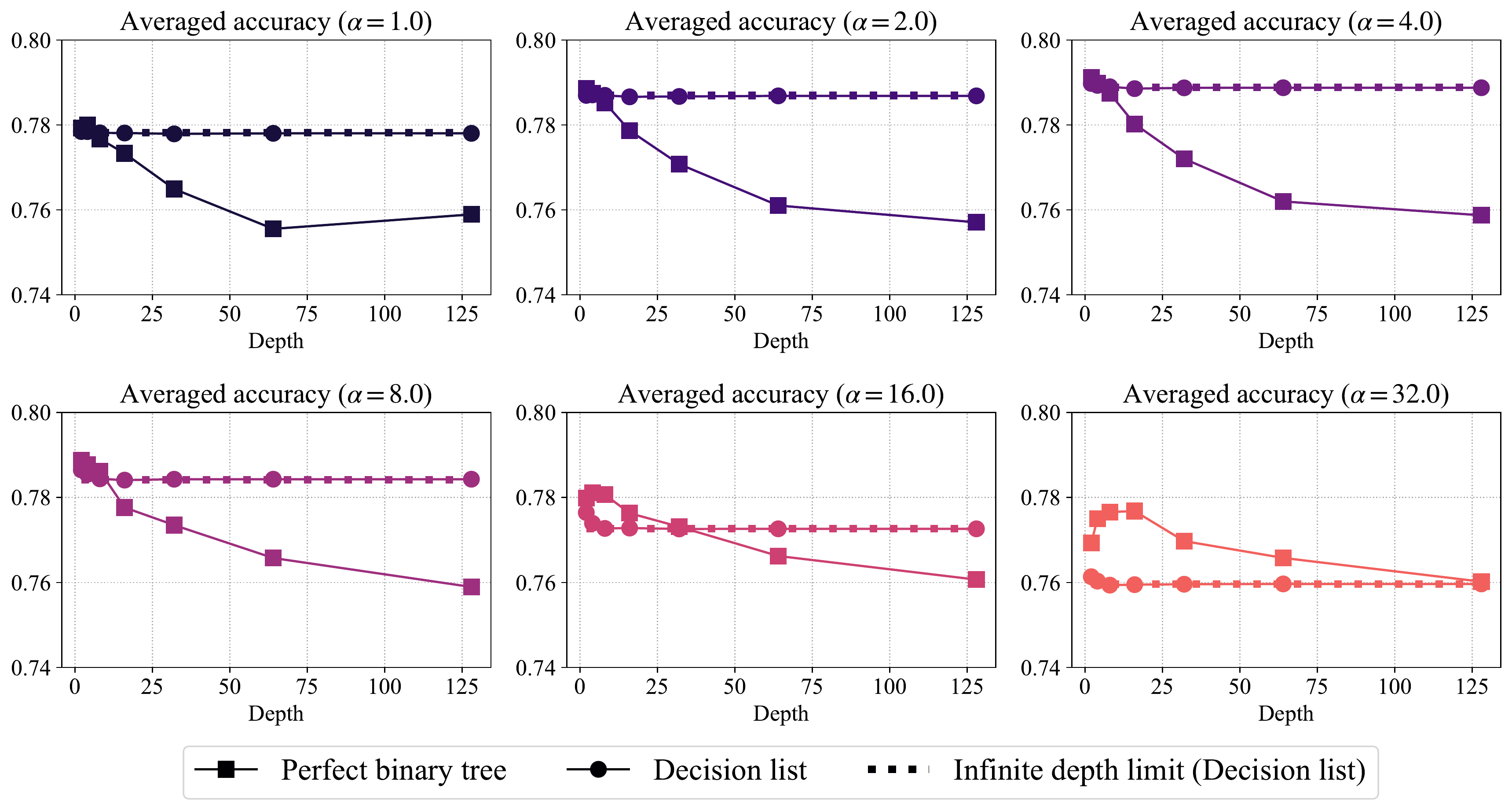}
    \caption{Averaged accuracy over $90$ datasets. Horizontal dotted lines show the accuracy of decision lists with the infinite depth. The statistical significance is assessed in the Appendix.}
    \label{fig:exp}
\end{figure}
{\bf Performance.}
Figure~\ref{fig:exp} shows the averaged performance in classification accuracy on 90 datasets. The generalization performance decreases as the tree depth increases when perfect binary trees are used as weak learners. However, no significant deterioration occurs when decision lists are used as weak learners. This result is consistent with the degeneracy properties as discussed in Section~\ref{subsec:degeneracy}.
The performance of decision lists already becomes almost consistent with their infinite depth limit when the depth reaches around $10$. This suggests that we will no longer see significant changes in output for deeper decision lists. For small $\alpha$, asymmetric trees often perform better than symmetric trees, but the characteristics reverse for large $\alpha$. 

{\bf Computational complexity of the kernel.}
Let $U = \sum_{d=1}^{D} \mathds{1}_{\mathcal{Q}(d)>0}$, the number of depths connected to leaves. In general, the complexity for computing each  kernel value for a pair of samples is $O(U)$.
However, there are cases in which we can reduce the complexity to $O(1)$, such as in the case of an infinitely deep decision list as shown in Proposition~\ref{pro:inf}, although $U = \infty$.

\section{Discussions}
{\bf Application to Neural Architecture Search (NAS).}~
\citet{NEURIPS2019_dbc4d84b} proposed using the NTK for \emph{Neural Architecture Search} (NAS) \citep{JMLR:v20:18-598} for performance estimation. Such studies have been active in recent years~\citep{chen2021neural,pmlr-v139-xu21m,Demystifying}. Our findings allow us to reduce the number of tree architecture candidates significantly. Theorem~\ref{thm:any} tells us the existence of redundant architectures that do not need to be explored in NAS. The numerical experiments shown in Figure~\ref{fig:exp} suggest that we do not need to explore extremely deep tree structures even with asymmetric tree architecture.

{\bf Analogy between decision lists and residual networks.}~
\citet{NEURIPS2020_1c336b80} showed that although the multi-layer perceptron without skip-connection~\citep{7780459} exhibits the degeneracy phenomenon, the multi-layer perceptron with skip-connection does not exhibit it.
This is common to our situation, where skip-connection for the multi-layer perceptron corresponds to asymmetric structure for soft trees like decision lists.
Moreover, \citet{NIPS2016_37bc2f75} proposed an interpretation of residual networks showing that they can be seen as a collection of many paths of differing lengths. This is similar to our case, because decision lists can be viewed as a collection of paths, i.e., rule sets, with different lengths.
Therefore, our findings in this paper suggest that there may be a common reason why performance does not deteriorate easily as the depth increases.

\section{Conclusions}
We have introduced and studied the NTK induced by arbitrary tree architectures. Our theoretical analysis via the kernel provides new insights into the behavior of the infinite ensemble of soft trees: for different soft trees, if the number of leaves per depth is equal, the training behavior of their infinite ensembles in function space matches exactly, even if the tree architectures are not isomorphic.
We have also shown, theoretically and empirically, that the deepening of asymmetric trees like decision lists does not necessarily induce the degeneracy phenomenon, although it occurs in symmetric perfect binary trees.

\subsubsection*{Acknowledgement}
This work was supported by JSPS, KAKENHI Grant Number JP21H03503d, Japan and JST, CREST Grant Number JPMJCR22D3, Japan.

\bibliography{reference}

\appendix
\numberwithin{equation}{section}
\renewcommand{\thefigure}{A.\arabic{figure}}
\setcounter{figure}{0}
\renewcommand{\thetable}{A.\arabic{table}}
\setcounter{table}{0}

\section{Proofs}
\subsection{Proof of Theorem~\ref{thm:rule_ntk}}
\setcounter{theorem}{1}
\begin{theorem}
The NTK for an ensemble of $M$ soft rule sets with the depth $D$ converges in probability to the following deterministic kernel as $M\to\infty$,
\begin{align}
    {\Theta}^{(D, \text{\upshape{Rule}})}(\boldsymbol{x}_i, \boldsymbol{x}_j) &\coloneqq \lim_{M\rightarrow\infty} \widehat{\Theta}^{(D, \text{\upshape{Rule}})}_0(\boldsymbol{x}_i, \boldsymbol{x}_j) \nonumber \\
    &= \underbrace{D~\Sigma(\boldsymbol{x}_i, \boldsymbol{x}_j) (\mathcal{T}(\boldsymbol{x}_i, \boldsymbol{x}_j))^{D-1}\dot{\mathcal{T}}(\boldsymbol{x}_i, \boldsymbol{x}_j)}_{\text{\upshape contribution from internal nodes}} + \underbrace{(\mathcal{T}(\boldsymbol{x}_i, \boldsymbol{x}_j))^D}_{\text{\upshape contribution from leaves}} . \nonumber 
\end{align}
\end{theorem}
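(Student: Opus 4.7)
The plan is to specialize the ensemble model to rule sets, compute the per-parameter gradients, and invoke the weak law of large numbers to pass from a finite average over $M$ independent weak learners to a deterministic expectation that factorizes thanks to the independence of the Gaussian initialization across internal nodes and across leaves.

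First I would fix the functional form of a soft rule set of depth $D$. By the remark after \Eqref{eq:mu}, replacing the right-flow $(1-\sigma(\cdot))$ by $0$ collapses every weak learner $m$ to a single chain, so
\begin{align*}
f_m(\boldsymbol{x}_i, \boldsymbol{w}_m, \pi_m) = \pi_m \prod_{n=1}^{D} \sigma(\boldsymbol{w}_{m,n}^\top \boldsymbol{x}_i), \qquad f(\boldsymbol{x}_i) = \frac{1}{\sqrt{M}} \sum_{m=1}^{M} f_m(\boldsymbol{x}_i, \boldsymbol{w}_m, \pi_m),
\end{align*}
with $\pi_m, \boldsymbol{w}_{m,n}$ independent and standard Gaussian under the NTK initialization. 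Differentiating gives $\partial f/\partial \pi_m = \tfrac{1}{\sqrt{M}} \prod_n \sigma(\boldsymbol{w}_{m,n}^\top \boldsymbol{x}_i)$ and $\partial f/\partial \boldsymbol{w}_{m,n} = \tfrac{1}{\sqrt{M}} \pi_m \, \dot{\sigma}(\boldsymbol{w}_{m,n}^\top \boldsymbol{x}_i)\, \boldsymbol{x}_i \prod_{n' \neq n} \sigma(\boldsymbol{w}_{m,n'}^\top \boldsymbol{x}_i)$.

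Next I would split the NTK at initialization as $\widehat{\Theta}^{(D,\text{Rule})}_0 = \widehat{\Theta}^{\text{leaf}}_0 + \widehat{\Theta}^{\text{int}}_0$ following the formula written after \Eqref{eq:ntk}. For the leaf term, substitution yields a $1/M$ sum of i.i.d.\ bounded random variables (since $\sigma$ is bounded), so the weak law of large numbers gives convergence in probability to
\begin{align*}
\mathbb{E}\!\left[\prod_{n=1}^{D} \sigma(\boldsymbol{w}_{m,n}^\top \boldsymbol{x}_i)\sigma(\boldsymbol{w}_{m,n}^\top \boldsymbol{x}_j)\right] = \prod_{n=1}^{D} \mathbb{E}[\sigma(\boldsymbol{u}^\top \boldsymbol{x}_i)\sigma(\boldsymbol{u}^\top \boldsymbol{x}_j)] = \mathcal{T}(\boldsymbol{x}_i, \boldsymbol{x}_j)^D,
\end{align*}
where the factorization uses independence of the $\boldsymbol{w}_{m,n}$ across $n$ and the definition of $\mathcal{T}$ from Theorem~\ref{thm:tntk}. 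For the internal-node term, I would likewise average over $m$, pulling out $\mathbb{E}[\pi_m^2] = 1$ and the data inner product $\boldsymbol{x}_i^\top \boldsymbol{x}_j = \Sigma(\boldsymbol{x}_i, \boldsymbol{x}_j)$, then factor the expectation across $n$ to isolate one $\dot{\mathcal{T}}$ factor at the differentiated index and $D-1$ copies of $\mathcal{T}$ at the remaining indices. Summing over the $D$ internal-node indices delivers $D\,\Sigma(\boldsymbol{x}_i, \boldsymbol{x}_j)\, \mathcal{T}(\boldsymbol{x}_i, \boldsymbol{x}_j)^{D-1}\dot{\mathcal{T}}(\boldsymbol{x}_i, \boldsymbol{x}_j)$, matching the stated formula.

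The main technical obstacle is justifying the law of large numbers for the internal-node contribution, because it contains the factor $\pi_m^2$, which is unbounded. However, the summand is a product of $\pi_m^2$ (sub-exponential) with a bounded quantity (the $\sigma$ and $\dot{\sigma}$ values are bounded under the standing assumptions on the decision function and, for the scaled error function, $\dot{\sigma}$ is bounded by $\alpha/\sqrt{\pi}$), so it has finite variance and the usual $L^2$ weak law applies term-by-term; this gives the claimed convergence in probability after summing the finitely many ($D$ internal-node plus 1 leaf) components. Independence of $\pi_m$ from $\boldsymbol{w}_{m,n}$, and of the $\boldsymbol{w}_{m,n}$ across $n$, is what allows the expectation inside each term to cleanly separate into $\mathbb{E}[\pi_m^2]$, one $\dot{\mathcal{T}}$ factor, and $D-1$ copies of $\mathcal{T}$; this factorization is the linchpin of the argument.
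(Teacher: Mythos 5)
Your proposal is correct and follows essentially the same route as the paper's proof: the same leaf/internal-node decomposition, the same per-parameter gradients, and the same independence-based factorization of the expectation into $\mathbb{E}[\pi_m^2]=1$, one $\dot{\mathcal{T}}$ factor, and $D-1$ copies of $\mathcal{T}$, summed over the $D$ node positions. Your explicit justification of the weak law of large numbers for the unbounded $\pi_m^2$ factor is a point the paper glosses over, but it does not change the argument.
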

\begin{proof}
We consider the contribution from internal nodes $\Theta^{(D,\text{\upshape{Rule,nodes}})}$ and the contribution from leaves $\Theta^{(D,\text{\upshape{Rule,leaves}})}$ separately, such that
\begin{align}
    \Theta^{(D,\text{\upshape{Rule}})}\left(\boldsymbol{x}_i, \boldsymbol{x}_j\right) &= \Theta^{(D,\text{\upshape{Rule,nodes}})}\left(\boldsymbol{x}_i, \boldsymbol{x}_j\right) + \Theta^{(D,\text{\upshape{Rule,leaves}})}\left(\boldsymbol{x}_i, \boldsymbol{x}_j\right).
\end{align}
As for internal nodes, we have
\begin{align}
    \frac{f^{(D,\text{\upshape{Rule}})}\left(\boldsymbol{x}_{i}, \boldsymbol{w}, \boldsymbol{\pi}\right)}{\partial \boldsymbol{w}_{m,t}} = \frac{1}{\sqrt{M}}\boldsymbol{x}_i \dot{\sigma}(\boldsymbol{w}_{m,t}^\top \boldsymbol{x}_i) f_m^{(D-1,\text{\upshape{Rule}})}\left(\boldsymbol{x}_{i}, \boldsymbol{w}_{m, -t}, \boldsymbol{\pi}_m\right),
\end{align}
where we consider the derivative with respect to a node $t$, and 
$\boldsymbol{w}_{m,-t}$ denotes the internal node parameter matrix except for the parameters of the node $t$. Since there are $D$ possible locations for $t$, we obtain
\begin{align}
    \Theta^{(D,\text{\upshape{Rule,nodes}})}\left(\boldsymbol{x}_i, \boldsymbol{x}_j\right) =D~\Sigma(\boldsymbol{x}_i, \boldsymbol{x}_j) (\mathcal{T}(\boldsymbol{x}_i, \boldsymbol{x}_j))^{D-1}\dot{\mathcal{T}}(\boldsymbol{x}_i, \boldsymbol{x}_j), \label{eq:a:nodes}
\end{align}
where
\begin{align}
    &\mathbb{E}_m\left[f_m^{(D,\text{\upshape{Rule}})}\left(\boldsymbol{x}_{i}, \boldsymbol{w}_m, \boldsymbol{\pi}_m\right)f_m^{(D,\text{\upshape{Rule}})}\left(\boldsymbol{x}_{j}, \boldsymbol{w}_m, \boldsymbol{\pi}_m\right)\right] \nonumber \\
    =\ &\mathbb{E}_m \left[\underbrace{\sigma(\boldsymbol{w}_{m,1}^\top \boldsymbol{x}_i)\sigma(\boldsymbol{w}_{m,1}^\top \boldsymbol{x}_j)}_{\rightarrow \mathcal{T}(\boldsymbol{x}_i, \boldsymbol{x}_j)}
    \underbrace{\sigma(\boldsymbol{w}_{m,2}^\top \boldsymbol{x}_i)\sigma(\boldsymbol{w}_{m,2}^\top \boldsymbol{x}_j)}_{\rightarrow \mathcal{T}(\boldsymbol{x}_i, \boldsymbol{x}_j)}
    \cdots
    \underbrace{\sigma(\boldsymbol{w}_{m,D}^\top \boldsymbol{x}_i)\sigma(\boldsymbol{w}_{m,D}^\top \boldsymbol{x}_j)}_{\rightarrow \mathcal{T}(\boldsymbol{x}_i, \boldsymbol{x}_j)} \underbrace{\pi_{m,1}^2}_{\rightarrow 1} \right]
    \nonumber \\ 
    =\ &(\mathcal{T}(\boldsymbol{x}_{i},\boldsymbol{x}_{j}))^D
\end{align}
is used.  Here, the subscription ``$\rightarrow$'' means that the expected value of the corresponding term will be.

Similarly, for leaves, 
\begin{align}
    \frac{f^{{(D,\text{\upshape{Rule}})}}\left(\boldsymbol{x}_{i}, \boldsymbol{w}, \boldsymbol{\pi}\right)}{\partial \pi_{m,1}} = \frac{1}{\pi_{m,1} \sqrt{M}} f_m ^{(D,\text{\upshape{Rule}})}\left(\boldsymbol{x}_{i}, \boldsymbol{w}_m, \boldsymbol{\pi}_m\right),
\end{align}
resulting in
\begin{align}
    \Theta^{(D,\text{\upshape{Rule,leaves}})}\left(\boldsymbol{x}_i, \boldsymbol{x}_j\right) = (\mathcal{T}(\boldsymbol{x}_{i}, \boldsymbol{x}_{j}))^D. \label{eq:a:leaves}
\end{align}
Combining \Eqref{eq:a:nodes} and \Eqref{eq:a:leaves}, we obtain \Eqref{eq:rule_ntk}.
\end{proof}

\subsection{Proof of Theorem~\ref{thm:any}}
\setcounter{theorem}{2}
\begin{theorem}
Let $\mathcal{Q}: \mathbb{N}\to\mathbb{N}\cup\{0\}$ be a function that receives any depth and returns the number of leaves connected to internal nodes at the input depth. For any tree architecture, the NTK for an ensemble of soft trees converges in probability to the following deterministic kernel as $M\to\infty$,
\begin{align}
    \Theta^{(\text{\upshape{ArbitraryTree}})}(\boldsymbol{x}_i, \boldsymbol{x}_j) \coloneqq \lim_{M\rightarrow\infty} \widehat{\Theta}^{(\text{\upshape{ArbitraryTree}})}_0(\boldsymbol{x}_i, \boldsymbol{x}_j) = \sum_{d=1}^{D} \mathcal{Q}(d) ~\Theta^{(d,\text{\upshape{Rule}})}(\boldsymbol{x}_i, \boldsymbol{x}_j). \nonumber
\end{align}
\end{theorem}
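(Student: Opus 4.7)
The plan is to prove Theorem~\ref{thm:any} by decomposing the tree NTK into per-leaf contributions and showing that each leaf of depth $d$ contributes exactly what a single depth-$d$ rule from Theorem~\ref{thm:rule_ntk} contributes, so that summing over leaves yields the weighted sum $\sum_d \mathcal{Q}(d)\,\Theta^{(d,\text{\upshape{Rule}})}$. I would split the NTK into its leaf-parameter and internal-node-parameter pieces, handle each separately, and invoke the weak law of large numbers in $M$ at the end.

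For the leaf piece, the derivative $\partial f/\partial \pi_{m,\ell}$ equals $\mu_{m,\ell}/\sqrt{M}$, and $\mu_{m,\ell}$ is a product of exactly $d_\ell$ factors of the form $\sigma(\boldsymbol{w}_{m,n}^\top \boldsymbol{x})$ or $1 - \sigma(\boldsymbol{w}_{m,n}^\top \boldsymbol{x})$, one per internal node on the root-to-$\ell$ path. Because $\sigma$ is rotationally symmetric about $(0,1/2)$ and the Gaussian initialisation satisfies $\boldsymbol{w}_{m,n} \stackrel{d}{=} -\boldsymbol{w}_{m,n}$, both $\mathbb{E}[\sigma(\boldsymbol{u}^\top \boldsymbol{x}_i)\sigma(\boldsymbol{u}^\top \boldsymbol{x}_j)]$ and $\mathbb{E}[(1-\sigma(\boldsymbol{u}^\top \boldsymbol{x}_i))(1-\sigma(\boldsymbol{u}^\top \boldsymbol{x}_j))]$ reduce to the same $\mathcal{T}(\boldsymbol{x}_i, \boldsymbol{x}_j)$, so independence of the $\boldsymbol{w}_{m,n}$'s across distinct internal nodes factorises the expectation to $\mathcal{T}^{d_\ell}$ regardless of the left/right pattern along the path. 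Summing over $\ell$ and grouping by depth gives the leaf contribution $\sum_d \mathcal{Q}(d)\,\mathcal{T}^d$.

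For the internal-node piece, I would write $\partial f/\partial \boldsymbol{w}_{m,n}$ as a sum over those leaves $\ell$ whose root-to-leaf path contains $n$, each term carrying $s_\ell\,\pi_{m,\ell}\,\boldsymbol{x}\,\dot{\sigma}(\boldsymbol{w}_{m,n}^\top\boldsymbol{x})$ multiplied by the sigmoid product along the remaining $d_\ell - 1$ path nodes, where $s_\ell \in \{+1,-1\}$ records whether $\ell$ sits in the left or right subtree of $n$. In the NTK inner product, zero-mean pairwise independence of the $\pi_{m,\ell}$'s annihilates every cross-leaf term ($\ell_1 \neq \ell_2$), the signs square to $+1$ on the diagonal, and the same symmetry argument as above reduces each surviving diagonal expectation to $\Sigma(\boldsymbol{x}_i,\boldsymbol{x}_j)\,\dot{\mathcal{T}}(\boldsymbol{x}_i,\boldsymbol{x}_j)\,\mathcal{T}(\boldsymbol{x}_i,\boldsymbol{x}_j)^{d_\ell - 1}$. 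Swapping the order of summation and using that each leaf of depth $d_\ell$ is passed through by exactly $d_\ell$ internal nodes yields the internal-node contribution $\sum_d \mathcal{Q}(d)\,d\,\Sigma\,\dot{\mathcal{T}}\,\mathcal{T}^{d-1}$. Combining the two pieces, the summand at each depth $d$ is precisely $\Theta^{(d,\text{\upshape{Rule}})}(\boldsymbol{x}_i,\boldsymbol{x}_j)$ from Theorem~\ref{thm:rule_ntk}, which proves the formula.

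The main obstacle I expect is the conceptual one of justifying that a shared-parameter tree decouples into independent rules in the NTK limit. Two symmetries do all the work and deserve careful verification: (i) $\sigma(-p) = 1-\sigma(p)$ together with $\boldsymbol{w} \stackrel{d}{=} -\boldsymbol{w}$ makes the ``all-left'' and ``all-right'' factors along any shared path both collapse to $\mathcal{T}$, which is what lets the diagonal-leaf contribution be path-pattern-invariant; and (ii) zero-mean pairwise independence of the leaf parameters $\pi_{m,\ell}$ kills the cross-leaf terms, which would otherwise involve the mixed ``left-right'' expectation $\tfrac{1}{2}-\mathcal{T}$ and spoil the clean rule-set decomposition. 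Once these are in hand, the proof reduces to routine bookkeeping over which leaves pass through which internal nodes, plus the standard concentration as $M \to \infty$ already used in Theorem~\ref{thm:rule_ntk}.
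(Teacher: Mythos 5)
Your proposal is correct and follows essentially the same route as the paper's proof: the same split into leaf and internal-node contributions, the same use of the symmetry $\sigma(-p)=1-\sigma(p)$ with the sign-symmetric Gaussian initialization to collapse left/right factors to $\mathcal{T}$, the same appeal to zero-mean independence of the $\pi_{m,\ell}$ to kill cross-leaf terms, and the same per-leaf bookkeeping (a depth-$d$ leaf contributing through its $d$ ancestor nodes) to assemble $\sum_d \mathcal{Q}(d)\,\Theta^{(d,\text{Rule})}$. No gaps.
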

\begin{proof}
We separate leaf and inner node contributions.

{\bf Contribution from Internal Nodes.}~
For a soft Boolean operation, the following equations hold:
\begin{align}
    \mathbb{E}_m\Bigl[(1-\sigma(\boldsymbol{w}_{m,n}^\top \boldsymbol{x}_i))(1-\sigma(\boldsymbol{w}_{m,n}^\top \boldsymbol{x}_j))\Bigr] &= \mathbb{E}_m \Biggl[1- \underbrace{\sigma(\boldsymbol{w}_{m,n}^\top \boldsymbol{x}_i)}_{\rightarrow 0.5} - \underbrace{\sigma(\boldsymbol{w}_{m,n}^\top \boldsymbol{x}_j)}_{\rightarrow 0.5} \nonumber \\
    &\qquad\qquad\qquad\qquad+ \sigma(\boldsymbol{w}_{m,n}^\top \boldsymbol{x}_i)\sigma(\boldsymbol{w}_{m,n}^\top \boldsymbol{x}_j)\Biggr] \nonumber \\
    &=\mathbb{E}_m[\sigma(\boldsymbol{w}_{m,n}^\top \boldsymbol{x}_i)\sigma(\boldsymbol{w}_{m,n}^\top \boldsymbol{x}_j)],  \label{eq:lr1} \\
    \mathbb{E}_m\left[\frac{\partial (1-\sigma(\boldsymbol{w}_{m,n}^\top \boldsymbol{x}_i))}{\partial \boldsymbol{w}_{m,n}}\frac{\partial (1-\sigma(\boldsymbol{w}_{m,n}^\top \boldsymbol{x}_j))}{\partial \boldsymbol{w}_{m,n}}\right] &= \mathbb{E}_m \left[\boldsymbol{x}_i ^\top \boldsymbol{x}_j \dot{\sigma}(\boldsymbol{w}_{m,n}^\top \boldsymbol{x}_i)\dot{\sigma}(\boldsymbol{w}_{m,n}^\top \boldsymbol{x}_j)\right] \nonumber \\
    &=\mathbb{E}_m\left[\frac{\partial \sigma(\boldsymbol{w}_{m,n}^\top \boldsymbol{x}_i)}{\partial \boldsymbol{w}_{m,n}}\frac{\partial \sigma(\boldsymbol{w}_{m,n}^\top \boldsymbol{x}_j)}{\partial \boldsymbol{w}_{m,n}}\right].  \label{eq:lr2}    
\end{align}
Since each $\sigma(\boldsymbol{w}_{m,n}^\top \boldsymbol{x}_i)$ becomes $0.5$, although the term $1-\sigma(\boldsymbol{w}_{m,n}^\top \boldsymbol{x}_i)$ is used instead of $\sigma(\boldsymbol{w}_{m,n}^\top \boldsymbol{x}_i)$ for the rightward flow in the tree, exactly the same limiting NTK can be obtained by treating $1-\sigma(\boldsymbol{w}_{m,n}^\top \boldsymbol{x}_i)$ as $\sigma(\boldsymbol{w}_{m,n}^\top \boldsymbol{x}_i)$. 

As for an inner node contribution, the derivative is obtained as
\begin{align}
    \frac{\partial f^{\text{\upshape{(ArbitraryTree)}}} (\boldsymbol{x}_i, \boldsymbol{w}, \boldsymbol{\pi})}{\partial \boldsymbol{w}_{m,n}} &=
    \frac{1}{\sqrt{M}}\sum_{\ell=1}^{\mathcal{L}} \pi_{m, \ell}\frac{\partial \mu_{m,\ell}(\boldsymbol{x}_i, \boldsymbol{w}_m)}{\partial \boldsymbol{w}_{m,n}} \nonumber \\
    &=\frac{1}{\sqrt{M}}\sum_{\ell=1}^{\mathcal{L}} \pi_{m, \ell} S_{n,\ell} (\boldsymbol{x}_i, \boldsymbol{w}_m)\boldsymbol{x}_i \dot{\sigma}\left(\boldsymbol{w}_{m, n}^{\top} \boldsymbol{x}_{i}\right),
\end{align}
where 
\begin{align}
    S_{n,\ell} (\boldsymbol{x}, \boldsymbol{w}_m) \coloneqq \Biggl(\prod_{n^\prime =1}^{\mathcal{N}} \sigma\left(\boldsymbol{w}_{m, n^\prime}^{\top} \boldsymbol{x}_{i}\right)^{\mathds{1}_{(\ell \swarrow n^\prime)\&(n \neq n^\prime)}}\left(1-\sigma\left(\boldsymbol{w}_{m, n^\prime}^{\top} \boldsymbol{x}_{i}\right)\right)^{\mathds{1}_{(n^\prime \searrow \ell)\&(n \neq n^\prime)}}\Biggr)(-1)^{\mathds{1}_{n \searrow \ell}},
\end{align}
and $\&$ is a logical conjunction. Since $\pi_{m,\ell}$ is initialized as zero-mean i.i.d.~Gaussians with unit variances, 
\begin{align}
\mathbb{E}_m \left[\pi_{m,\ell} \pi_{m,\ell'} \right] = 0 ~~\text{\upshape{if}}~~\ell \neq \ell'. \label{eq:independence}
\end{align}
Therefore, the inner node contribution for the limiting NTK is
\begin{align}
    &\Theta^{\text{\upshape{(ArbitraryTree,nodes)}}}(\boldsymbol{x}_i,\boldsymbol{x}_j)\nonumber \\
    =\ &\mathbb{E}_m \left[ \sum_{\ell=1}^{\mathcal{L}} \pi_{m, \ell}^2 S_{n,\ell} (\boldsymbol{x}_i, \boldsymbol{w}_m)S_{n,\ell} (\boldsymbol{x}_j, \boldsymbol{w}_m)\boldsymbol{x}_i^\top \boldsymbol{x}_j \dot{\sigma}\left(\boldsymbol{w}_{m, n}^{\top} \boldsymbol{x}_{i}\right)\dot{\sigma}\left(\boldsymbol{w}_{m, n}^{\top} \boldsymbol{x}_{j}\right)\right] \nonumber \\
    =\ &\Sigma(\boldsymbol{x}_i, \boldsymbol{x}_j)\dot{\mathcal{T}}(\boldsymbol{x}_i, \boldsymbol{x}_j)\mathbb{E}_m \left[\sum_{\ell=1}^{\mathcal{L}} S_{n,\ell} (\boldsymbol{x}_i, \boldsymbol{w}_m)S_{n,\ell} (\boldsymbol{x}_j, \boldsymbol{w}_m)\right]. \label{eq:node}
\end{align}
Suppose leaf $\ell$ is connected to an internal node of depth $d$. With \Eqref{eq:lr1} and~\Eqref{eq:lr2}, we obtain
\begin{align}
    \mathbb{E}_m \left[S_{n,\ell} (\boldsymbol{x}_i, \boldsymbol{w}_m)S_{n,\ell} (\boldsymbol{x}_j, \boldsymbol{w}_m)\right] = (\mathcal{T}(\boldsymbol{x}_i, \boldsymbol{x}_j))^{d-1}. \label{eq:s}
\end{align}
Therefore, considering all leaves,
\begin{align}
\Theta^{\text{\upshape{(ArbitraryTree,nodes)}}}(\boldsymbol{x}_i,\boldsymbol{x}_j) = \sum_{d=1}^{D}\mathcal{Q}(d)\Theta^{(d,\text{\upshape{Rule,nodes}})},
\end{align}
where $\Theta^{(d,\text{\upshape{Rule,nodes}})}$ is introduced in \Eqref{eq:a:nodes}

{\bf Contribution from Leaves.}~
As for the contribution from leaves, the derivative is obtained as
\begin{align}
    \frac{\partial f^{\text{\upshape{(ArbitraryTree)}}}\left(\boldsymbol{x}_{i}, \boldsymbol{w}, \boldsymbol{\pi}\right)}{\partial \pi_{m,\ell}} = \frac{1}{\sqrt{M}}\mu_{m,\ell}(\boldsymbol{x}_i, \boldsymbol{w}_{m}).
\end{align}
Since $\boldsymbol{w}_{m,n}$ used in $\mu_{m,\ell}(\boldsymbol{x}_i, \boldsymbol{w}_{m})$ is initialized as zero-mean i.i.d.~Gaussians, contribution from leaves on the limiting NTK induced by arbitrary tree architecture is:
\begin{align}
    \Theta^{\text{\upshape{(ArbitraryTree,leaves)}}}(\boldsymbol{x}_i,\boldsymbol{x}_j) &=
    \mathbb{E}_m\left[\sum_{\ell=1}^{\mathcal{L}} \mu_{m,\ell}(\boldsymbol{x}_i, \boldsymbol{w}_{m})\mu_{m,\ell}(\boldsymbol{x}_j, \boldsymbol{w}_{m})\right] \nonumber \\ &=\sum_{d=1}^{D}\mathcal{Q}(d)\Theta^{(d,\text{\upshape{Rule,leaves}})},
\end{align}
where $\Theta^{(d,\text{\upshape{Rule,leaves}})}$ is introduced in \Eqref{eq:a:leaves}
\end{proof}

\subsection{Proof of Theorem~\ref{thm:freeze}}
\setcounter{theorem}{3}
\begin{theorem}
Let $\lambda_{\text{\upshape min}}$ and $\lambda_{\text{\upshape max}}$ be the minimum and maximum eigenvalues of the limiting NTK. Assume $\| \boldsymbol{x}_i \|_2 = 1$ for all $i \in [N]$ and $\boldsymbol{x}_i \neq \boldsymbol{x}_j~(i\neq j)$. For ensembles of arbitrary soft trees with the NTK initialization trained under gradient flow with a learning rate $\eta < 2/(\lambda_{\text{\upshape min}} + \lambda_{\text{\upshape max}})$ and a positive finite scaling factor $\alpha$, we have, with high probability,
\begin{align}
    \sup\left|{\widehat{{\Theta}}}_{\tau}^{(\text{\upshape{ArbitraryTree}})}\left(\boldsymbol{x}_{i}, \boldsymbol{x}_{j}\right) - \widehat{{{\Theta}}}_{0}^{(\text{\upshape{ArbitraryTree}})}\left(\boldsymbol{x}_{i}, \boldsymbol{x}_{j}\right)\right| = \mathcal{O}\left(\frac{1}{\sqrt{M}}\right).
\end{align}
\end{theorem}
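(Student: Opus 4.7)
The plan is to follow the standard NTK stability template (as in Jacot et al.\ and follow-ups), adapted to the soft tree parameterization, and exploit the $1/\sqrt{M}$ scaling in \Eqref{eq:norm} together with the boundedness properties inherited from the $\sigma$-like decision function.

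First I would reduce the uniform-in-$\tau$ bound to two separate claims: (i) at initialization, $\widehat{\Theta}_0^{(\text{ArbitraryTree})}$ concentrates to the deterministic limit $\Theta^{(\text{ArbitraryTree})}$ of Theorem~\ref{thm:any} at rate $\mathcal{O}(1/\sqrt{M})$, and (ii) during gradient flow, the parameters stay in an $\mathcal{O}(1/\sqrt{M})$ ball around initialization, while the Jacobian $(\partial f/\partial\boldsymbol{\theta})$ is Lipschitz in $\boldsymbol{\theta}$ on that ball. Claim (i) follows from a Hoeffding/Bernstein-type concentration: since $\|\boldsymbol{x}_i\|_2=1$, $\alpha$ is finite, and $\sigma,\dot\sigma$ are uniformly bounded, the per-tree summands in $\widehat{\Theta}_0$ are bounded random variables whose mean is exactly $\Theta^{(\text{ArbitraryTree})}/M$ by Theorem~\ref{thm:any}. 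Summing $M$ independent copies and using the $1/\sqrt{M}$ normalization in \Eqref{eq:norm} gives the $\mathcal{O}(1/\sqrt{M})$ deviation with high probability, uniformly over the $N^2$ pairs $(i,j)$ by a union bound.

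For claim (ii), I would first use the concentration in (i) together with the assumption $\boldsymbol{x}_i\neq\boldsymbol{x}_j$ to conclude $\lambda_{\min}(\widehat{\boldsymbol{H}}_0) \geq \lambda_{\min}/2$ with high probability for $M$ large. Under gradient flow on the squared loss, the residual $\boldsymbol{r}_\tau = f_\tau(\boldsymbol{x}) - \boldsymbol{y}$ satisfies $\dot{\boldsymbol{r}}_\tau = -\eta\,\widehat{\boldsymbol{H}}_\tau \boldsymbol{r}_\tau$, so as long as $\widehat{\boldsymbol{H}}_\tau$ stays close to $\widehat{\boldsymbol{H}}_0$ we obtain geometric decay $\|\boldsymbol{r}_\tau\|_2 \leq e^{-\eta\lambda_{\min}\tau/2}\|\boldsymbol{r}_0\|_2$. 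Combined with the bound $\|\partial f/\partial\boldsymbol{\theta}\|_{\rm op} = \mathcal{O}(1)$ (again from boundedness of $\sigma$, $\dot\sigma$, unit-norm inputs, and the $1/\sqrt{M}$ aggregation), an integration of $\|\dot{\boldsymbol{\theta}}_\tau\|_2 = \eta\|(\partial f/\partial\boldsymbol{\theta})^\top \boldsymbol{r}_\tau\|_2$ over $\tau\in[0,\infty)$ yields a total displacement $\|\boldsymbol{\theta}_\tau - \boldsymbol{\theta}_0\|_2 = \mathcal{O}(1/\sqrt{M})$ for each individual coordinate $\boldsymbol{w}_{m,n}$ or $\pi_{m,\ell}$ (since the per-parameter contribution of the aggregated gradient carries a $1/\sqrt{M}$ factor).

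Finally I would invoke Lipschitz continuity of the gradient map $\boldsymbol{\theta}\mapsto \partial f/\partial\boldsymbol{\theta}$ on that ball: because $\sigma$ is smooth with finite $\alpha$, the second derivatives of $\mu_{m,\ell}$ are bounded, so
\begin{align*}
\left|\widehat{\Theta}_\tau^{(\text{ArbitraryTree})}(\boldsymbol{x}_i,\boldsymbol{x}_j) - \widehat{\Theta}_0^{(\text{ArbitraryTree})}(\boldsymbol{x}_i,\boldsymbol{x}_j)\right|
\leq C\,\|\boldsymbol{\theta}_\tau - \boldsymbol{\theta}_0\|_2 = \mathcal{O}(1/\sqrt{M}),
\end{align*}
uniformly in $\tau$, which closes the argument by a standard continuity-in-time bootstrap: the displacement bound is derived assuming $\widehat{\boldsymbol{H}}_\tau \succeq \lambda_{\min}/4\cdot\boldsymbol{I}$, and this hypothesis is preserved throughout $\tau$ via the Lipschitz estimate, so the two assertions close on each other. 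The main obstacle is carrying out this bootstrap carefully for the tree parameterization, where the gradient with respect to an internal node $\boldsymbol{w}_{m,n}$ factorizes into a product over sibling subtrees (the $S_{n,\ell}$ factors in the proof of Theorem~\ref{thm:any}); one must verify that each such factor is bounded and smooth uniformly across depths, which follows because $\sigma\in(0,1)$ and $\dot\sigma$ is bounded by $\alpha\|\dot\sigma\|_\infty$ for any finite $\alpha$.
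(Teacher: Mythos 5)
Your overall route is the same as the paper's: the paper proves this theorem by invoking the kernel-stability lemma of Lee et al.\ \citep{NEURIPS2019_0d1a9651}, whose two hypotheses are (a) local Lipschitzness of the model Jacobian at initialization and (b) positive definiteness of the limiting NTK; you are essentially re-deriving that lemma by hand (concentration at initialization, geometric decay of the residual, an integrated displacement bound, and a Lipschitz estimate on the Jacobian). That unrolling is legitimate, and your observation that the $S_{n,\ell}$ factors are uniformly bounded and smooth for finite $\alpha$ is exactly what underlies the Jacobian-Lipschitzness lemma the paper cites from \citet{kanoh2022a}. One bookkeeping caveat: the total displacement $\|\boldsymbol{\theta}_\tau-\boldsymbol{\theta}_0\|_2$ over \emph{all} parameters is $\mathcal{O}(1)$, not $\mathcal{O}(1/\sqrt{M})$; it is the per-tree displacement that is $\mathcal{O}(1/\sqrt{M})$, and the $\mathcal{O}(1/\sqrt{M})$ kernel movement comes from combining this with the $1/\sqrt{M}$ aggregation in \Eqref{eq:norm}. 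You gesture at this in your parenthetical, but the displayed inequality $|\widehat{\Theta}_\tau-\widehat{\Theta}_0|\leq C\|\boldsymbol{\theta}_\tau-\boldsymbol{\theta}_0\|_2$ with a constant $C$ independent of $M$ would only give $\mathcal{O}(1)$ as written.

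The genuine gap is hypothesis (b): you never establish that the \emph{limiting} NTK is positive definite, i.e.\ that $\lambda_{\text{\upshape min}}>0$. Your bootstrap needs this strictly --- without it the residual need not decay geometrically, the time integral of $\|\dot{\boldsymbol{\theta}}_\tau\|_2$ need not converge, and the parameters can drift arbitrarily far, so the whole ``stay in a small ball'' argument collapses. The assumptions $\|\boldsymbol{x}_i\|_2=1$ and $\boldsymbol{x}_i\neq\boldsymbol{x}_j$ are stated precisely to secure this property, and verifying it for \emph{arbitrary} architectures is the one piece of new content in the paper's proof: positive definiteness is known for the perfect binary tree kernel $\Theta^{(D,\text{\upshape{PB}})}$ (Lemma from \citet{kanoh2022a}); since $\Theta^{(D,\text{\upshape{Rule}})}=2^{-D}\,\Theta^{(D,\text{\upshape{PB}})}$ by Theorem~\ref{thm:rule_ntk}, each rule kernel is positive definite; and since $\Theta^{(\text{\upshape{ArbitraryTree}})}=\sum_{d}\mathcal{Q}(d)\,\Theta^{(d,\text{\upshape{Rule}})}$ by Theorem~\ref{thm:any} is a nonnegative combination with at least one positive coefficient, it is positive definite too. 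You should add this step (or an equivalent direct argument) before asserting $\lambda_{\min}(\widehat{\boldsymbol{H}}_0)\geq\lambda_{\text{\upshape min}}/2$.
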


\begin{proof}
To prove that the kernel does not move during training, we need to show the positive definiteness of the kernel and the local Lipschitzness of the model Jacobian at initialization $\boldsymbol{J}(\boldsymbol{x}, \boldsymbol{\theta})$, whose $(i,j)$ entry is  $\frac{\partial f(\boldsymbol{x}_i, \boldsymbol{\theta})}{\partial {\theta}_j}$ where $\theta_j$ is a $j$-th component of $\boldsymbol{\theta}$:
\begin{lemma}[\cite{NEURIPS2019_0d1a9651}]
Assume that the limiting NTK induced by any model architecture is positive definite for input sets $\boldsymbol{x}$, such that minimum eigenvalue of the NTK $\lambda_{\text{\upshape min}}>0$.
For models with local Lipschitz Jacobian trained under gradient flow with a learning rate $\eta < 2(\lambda_{\text{\upshape min}} + \lambda_{\text{\upshape max}})$, we have with high probability:
\begin{align}
    \sup\left|{\widehat{{\Theta}}}_{\tau} ^\ast \left(\boldsymbol{x}_{i}, \boldsymbol{x}_{j}\right) - \widehat{{{\Theta}}}_{0} ^\ast \left(\boldsymbol{x}_{i}, \boldsymbol{x}_{j}\right)\right| = \mathcal{O}\left(\frac{1}{\sqrt{M}}\right).
\end{align}
\end{lemma}

The local Lipschitzness of the soft tree ensemble's Jacobian at initialization is already proven, even with arbitrary tree architectures:
\begin{lemma}[\cite{kanoh2022a}]
For soft tree ensemble models with the NTK initialization and a positive finite scaling factor $\alpha$, there is $K > 0$ such that for every $C>0$, with high probability, the following holds:
\begin{align}
    \left\{\begin{array}{rl}
        \|\boldsymbol{J}(\boldsymbol{x}, \boldsymbol{\theta})\|_{F} & \leq K \\
        \|\boldsymbol{J}(\boldsymbol{x}, \boldsymbol{\theta})-\boldsymbol{J}(\boldsymbol{x}, \tilde{\boldsymbol{\theta}})\|_{F} & \leq K\|\boldsymbol{\theta}-\tilde{\boldsymbol{\theta}}\|_{2}
    \end{array}, \forall \boldsymbol{\theta}, \tilde{\boldsymbol{\theta}} \in B\left(\boldsymbol{\theta}_{0}, C \right)\right.,
\end{align}
where
\begin{align}
    B\left(\theta_{0}, C\right):=\left\{\boldsymbol{\theta}:\left\|\boldsymbol{\theta}-\boldsymbol{\theta}_{0}\right\|_{2}<C\right\}.
\end{align}
\end{lemma}

As for the positive definiteness, $\Theta^{(D, \text{PB})}(\boldsymbol{x}_i, \boldsymbol{x}_j)$ is known to be positive definite.

\begin{lemma}[\cite{kanoh2022a}] \label{pro:positive_definite}
For infinitely many perfect binary soft trees with any depth and the NTK initialization, the limiting NTK is positive definite if $\| \boldsymbol{x}_i \|_2 = 1$ for all $i \in [N]$ and $\boldsymbol{x}_i \neq \boldsymbol{x}_j~(i\neq j)$.
\end{lemma}

Since $\Theta^{(D, \text{PB})}(\boldsymbol{x}_i, \boldsymbol{x}_j)$ is equivalent to $\Theta^{(D, \text{Rule})}(\boldsymbol{x}_i, \boldsymbol{x}_j)$ up to constant multiple, $\Theta^{(D, \text{Rule})}(\boldsymbol{x}_i, \boldsymbol{x}_j)$ is positive definite under the same assumption. Besides, since $\Theta^{(\text{ArbitraryTree})}(\boldsymbol{x}_i, \boldsymbol{x}_j)$ is represented by the summation of $\Theta^{(D, \text{Rule})}(\boldsymbol{x}_i, \boldsymbol{x}_j)$ as in Theorem~{\ref{thm:any}}, $\Theta^{(\text{ArbitraryTree})}(\boldsymbol{x}_i, \boldsymbol{x}_j)$ is also positive definite under the same assumption.

These results show that the limiting NTK induced by arbitrary tree architecture also does not change during training.
\end{proof}

\section{Details of Numerical Experiments}
\subsection{Dataset acquisition}
We used the UCI datasets~\citep{Dua:2019} preprocessed by \citet{JMLR:v15:delgado14a}, which are publicly available at \url{http://persoal.citius.usc.es/manuel.fernandez.delgado/papers/jmlr/data.tar.gz}.
Since the size of the kernel is the square of the dataset size and too many data make training impractical, we used preprocessed UCI datasets with the number of samples smaller than $5000$. \citet{Arora2020Harnessing} reported the bug in the preprocess when the explicit training/test split is given. Therefore, we did not use that datasets with explicit training/test split. As a consequence, $90$ different datasets are available.

\subsection{Model specifications}
We used kernel regression implemented in scikit-learn\footnote{\url{https://scikit-learn.org/stable/modules/generated/sklearn.kernel_ridge.KernelRidge.html}}. To perform ridge-less situation, the regularization strength is set to be $1.0\times10^{-8}$, a very small constant.

\subsection{Computational resource}
We used Ubuntu Linux (version: 4.15.0-117-generic) and ran all
experiments on 2.20 GHz Intel Xeon E5-2698 CPU and $252$ GB of memory.

\subsection{Statistical significance}
\begin{figure}
    \centering
    \includegraphics[width=0.5\linewidth]{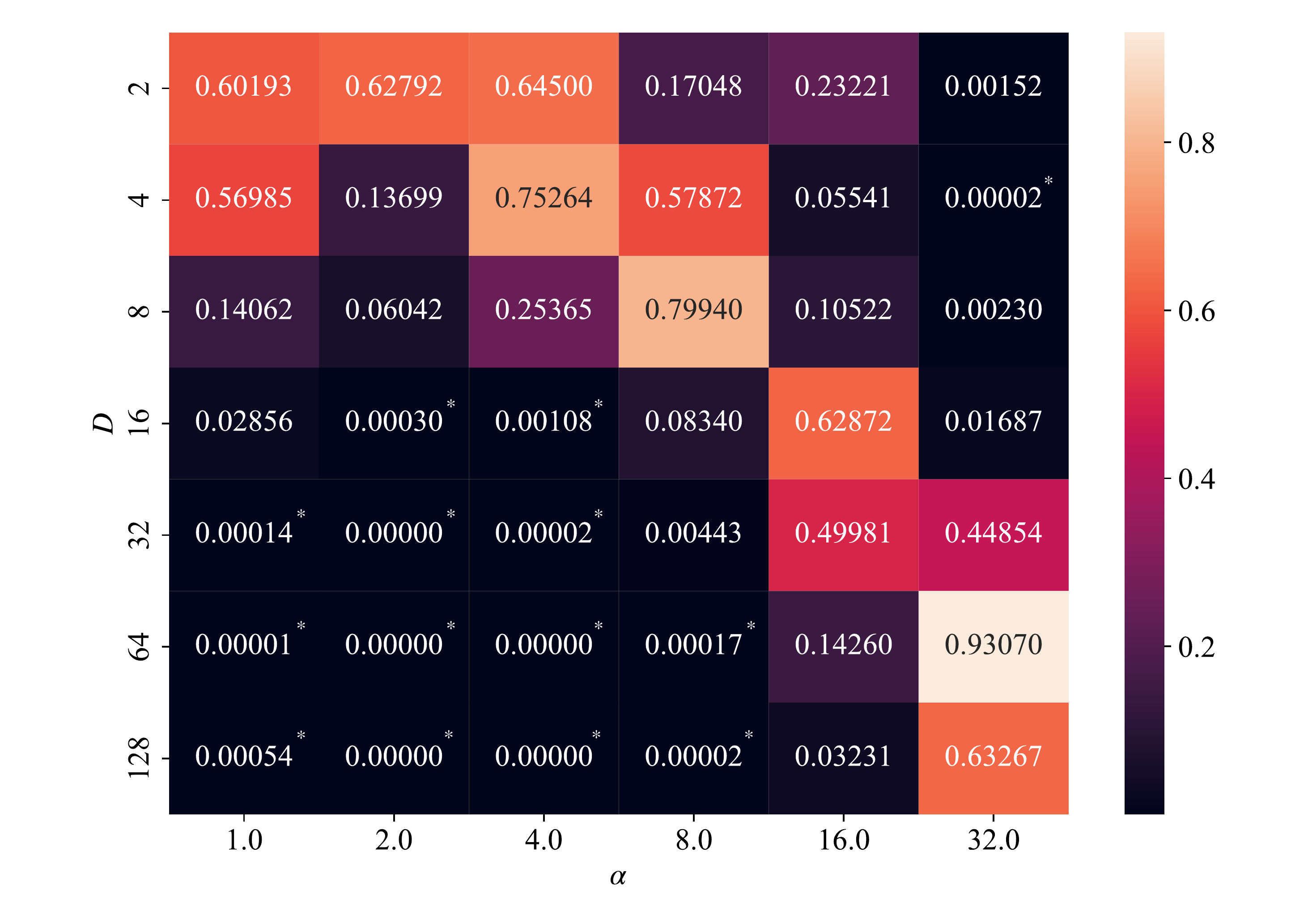}
    \caption{P-values of the Wilcoxon signed rank test for results on perfect binary trees and decision lists with different parameters.}
    \label{fig:wilcoxon}
\end{figure}
\begin{figure}
    \centering
    \includegraphics[width=0.7\linewidth]{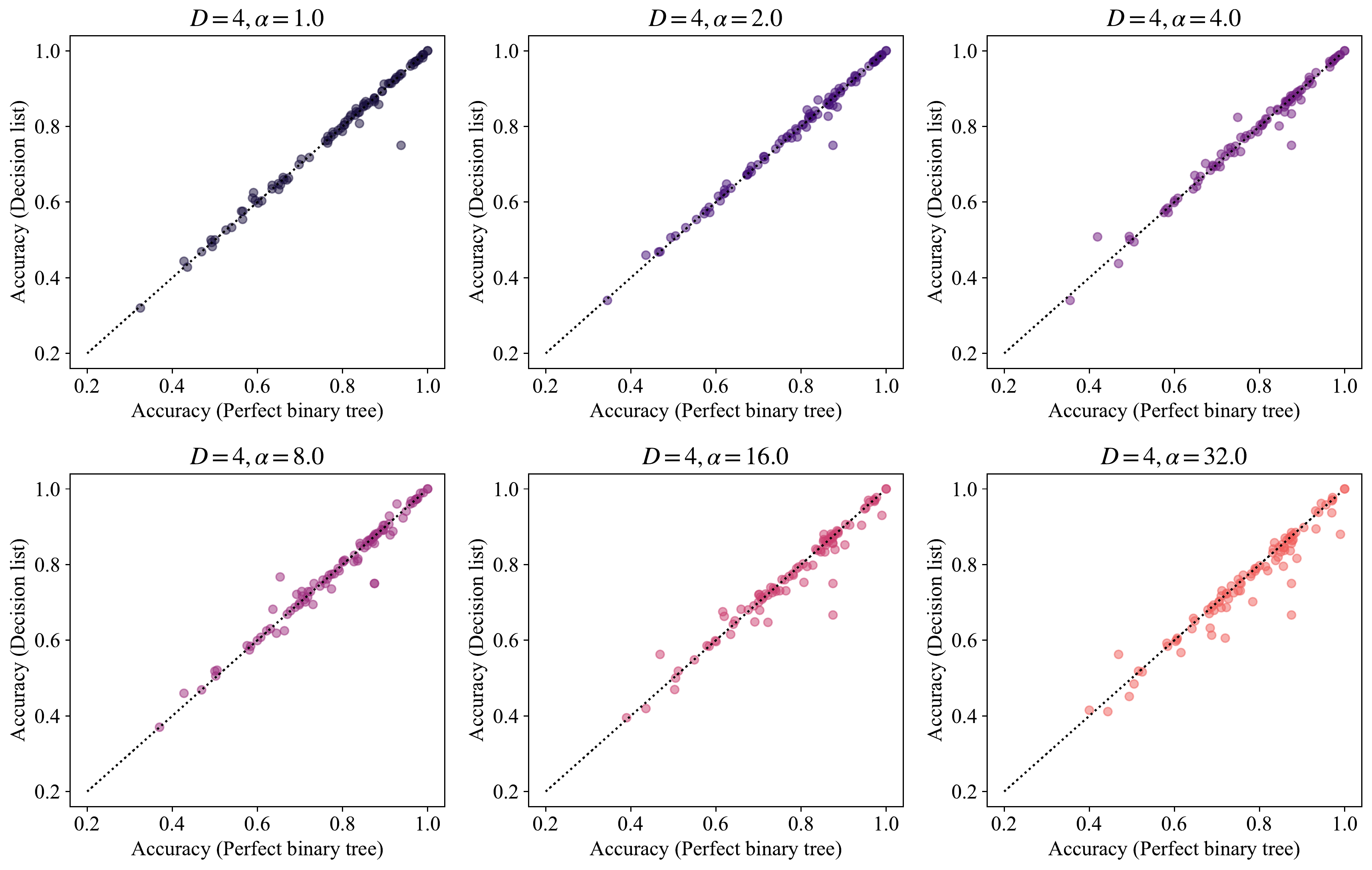}
    \caption{Performance comparisons between the kernel regression with the limiting NTK induced by the perfect binary tree and the decision list on the UCI datasets with $D=4$.}
    \label{fig:scatter4}
\end{figure}

\begin{figure}
    \centering
    \includegraphics[width=0.7\linewidth]{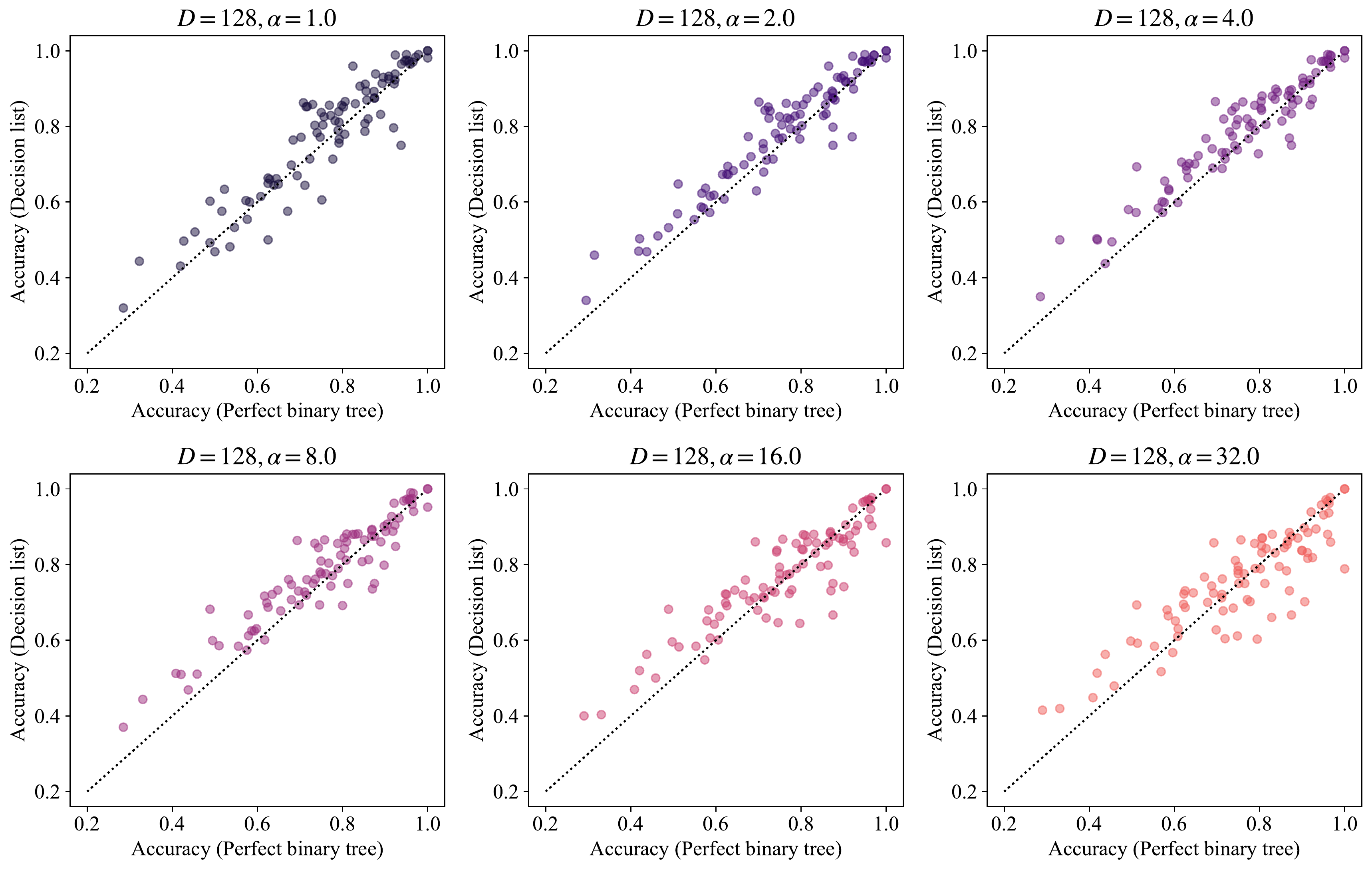}
    \caption{Performance comparisons between the kernel regression with the limiting NTK induced by the perfect binary tree and the decision list on the UCI datasets with $D=128$.}
    \label{fig:scatter128}
\end{figure}
We conducted a Wilcoxon signed rank test for $90$ datasets to check the statistical significance of the differences between performances on a perfect binary tree and a decision list.
Figure~\ref{fig:wilcoxon} shows the p-values. 
Statistically significant differences can be observed for areas where the differences appear large in Figure~\ref{fig:exp}, such as when $\alpha$ is small and $D$ is large. 
We used Bonferroni correction to account for multiple testing, and the resulting significance level of the p-value is about $0.0012$ for $5$ percent confidence level. An asterisk ``*''  is placed in Figure~\ref{fig:wilcoxon} with statistically significant result even after correction.
For cases where the symmetry of the tree does not produce a large difference, such as at the depth of $2$, the difference in performance is often not statistically significant.

Figures~\ref{fig:scatter4} and~\ref{fig:scatter128} show scatter-plots between the performance of the kernel regression with the limiting NTK induced by the perfect binary tree and the decision list. The deeper the tree, the larger the difference between them.

\section{Additional Numerical Experiments}
\begin{figure}
    \centering
    \includegraphics[width=0.8\linewidth]{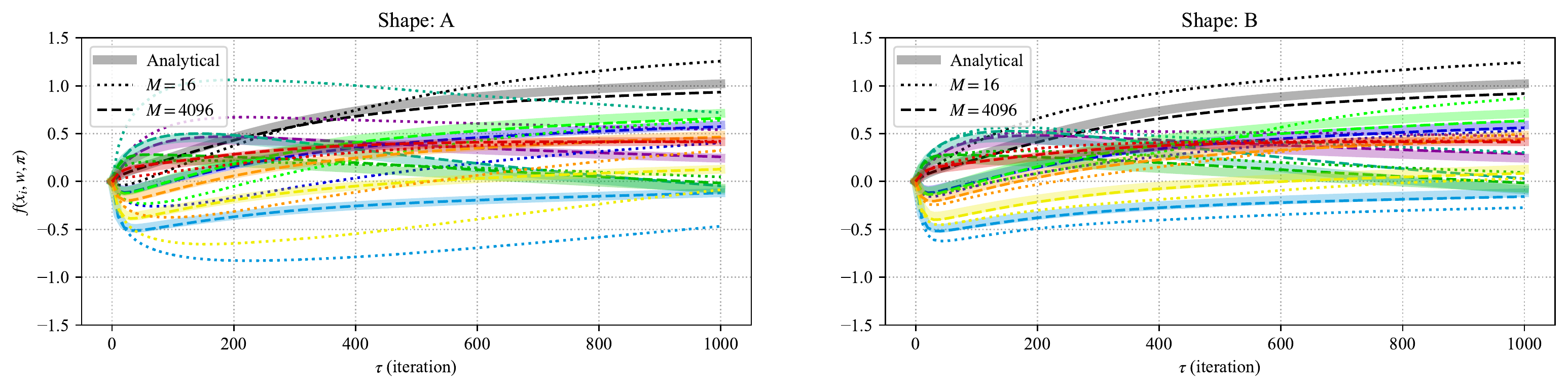}
    \caption{Output dynamics for test data points. Each line color corresponds to each data point. Analytical trajectories are the same for both shapes A and B.}
    \label{fig:trajectory2}
\end{figure}
\begin{figure}
    \centering
    \includegraphics[width=0.6\linewidth]{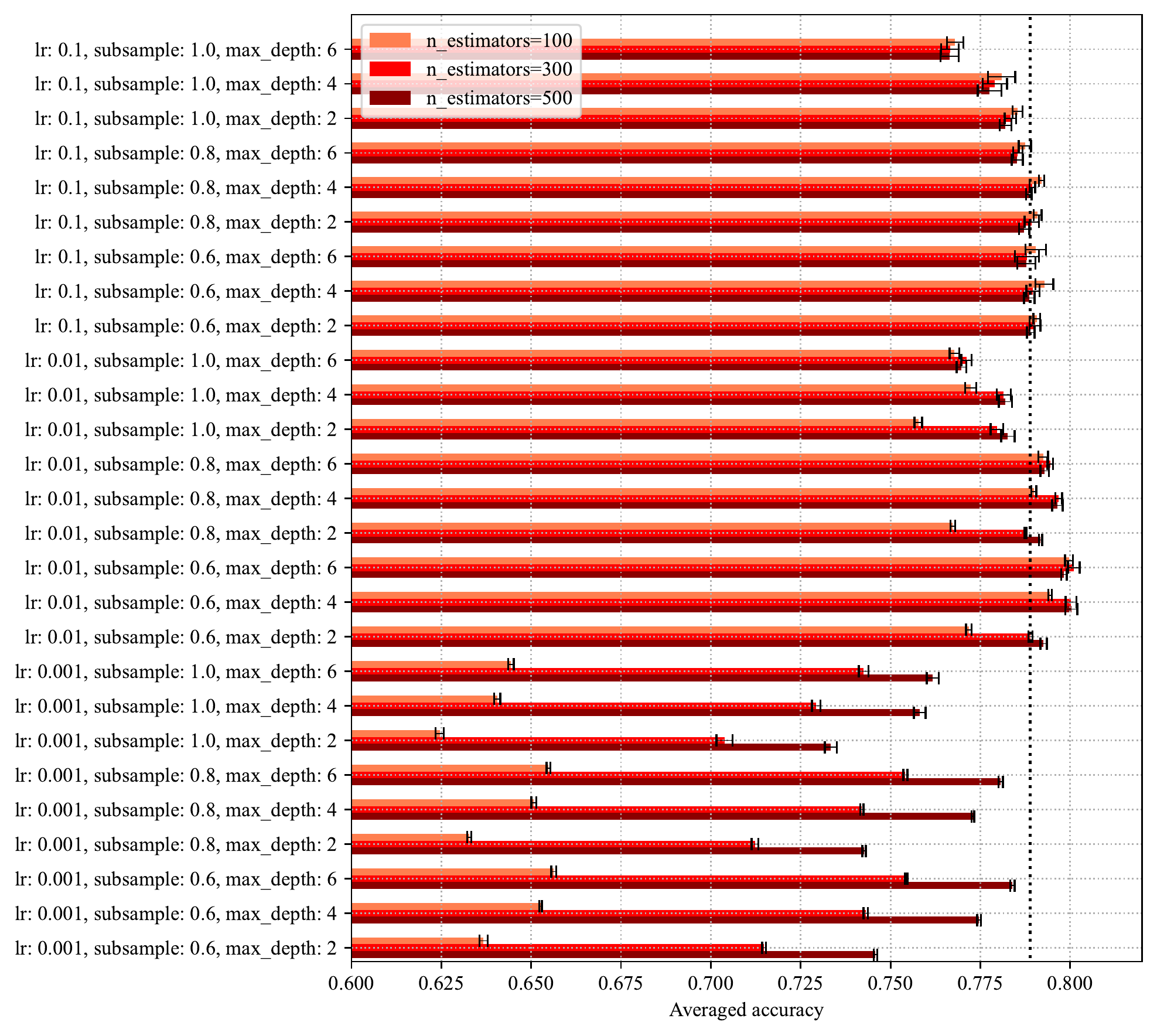}
    \caption{Averaged gradient boosting decision tree accuracy over $90$ datasets. The random seeds are changed five times and their averaged performance is shown. Their maximum and minimum performance is shown by the error bars. The vertical dotted line shows the performance of the infinitely deep decision list with $\alpha=4.0$.}
    \label{fig:gbdt}
\end{figure}
\subsection{Comparison to the ensembles of finite soft trees} 
Figure~\ref{fig:trajectory2} illustrates output trajectories for two different tree architectures shown in Figure~\ref{fig:arch} during gradient descent training with analytical trajectories obtained from the limiting kernel: $f(\boldsymbol{v}, \boldsymbol{\theta}_\tau)=\boldsymbol{H}(\boldsymbol{v}, \boldsymbol{x}) \boldsymbol{H}(\boldsymbol{x}, \boldsymbol{x})^{-1}(\boldsymbol{I}-\exp [-\eta \boldsymbol{H}(\boldsymbol{x}, \boldsymbol{x}) \tau]) \boldsymbol{y}$ \citep{NEURIPS2019_0d1a9651}, where $\boldsymbol{v} \in \mathbb{R}^{F}$ is an arbitrary input and $\boldsymbol{x} \in \mathbb{R}^{F \times {N}}$ and $\boldsymbol{y} \in \mathbb{R}^{N}$ are the training dataset and targets, respectively. The setup is the same with Figure~\ref{fig:isomorphic}. Since the limiting kernel is the same for both architectures, analytical trajectories are the same for both left and right panels. As the number of trees increases, we can see that the behavior of finite trees approaches the analytical trajectory obtained by the NTK.

\subsection{Comparison to the gradient boosting decision tree}
For reference information, we show experimental results for the gradient boosting decision tree. The experimental procedure is the same as that in Section~\ref{sec:exp}.
We used scikit-learn\footnote{\url{https://scikit-learn.org/stable/modules/generated/sklearn.ensemble.GradientBoostingRegressor.html}} for the implementation. 
As for hyperparameters, we used $\texttt{max_depth}$ in $\{2, 4, 6\}$, $\texttt{subsample}$ in $\{0.6, 0.8, 1.0\}$, $\texttt{learning_rate}$ in $\{0.1, 0.01, 0.001\}$, and $\texttt{n_estimators}$ (the number of trees) in $\{100, 300, 500\}$. Other parameters were set to be default values of the library.

Figure~\ref{fig:gbdt} shows the averaged accuracy over $90$ datasets. We used five random seeds $\{0, 1, 2, 3, 4\}$ and their mean, minimum, and maximum performances are reported.
When we use the best parameter, its averaged accuracy is $0.8010$, which is slightly better than the performance of infinitely deep decision list: $0.7889$ with $\alpha=4.0$ as shown in the dotted line in Figure~\ref{fig:gbdt}. When we look at each dataset, however, the infinitely deep decision list is superior to the gradient boosting decision tree for $35$ out of $90$ datasets. One is not necessarily better than the other, and their inductive biases may or may not be appropriate for each dataset.

\end{document}